\newtheorem{definition}{Definition}
\newtheorem{lemma}{Lemma}
\newtheorem{proposition}{Proposition}
\title{SteinerSQL: Graph-Guided Mathematical Reasoning \\ for Text-to-SQL Generation}
\author{
  Xutao Mao \\
  Vanderbilt University \\
  \texttt{xutao.mao@vanderbilt.edu}
  \And
  Tao Liu \\
  Zhengzhou University \\
  \texttt{taoliu01@gs.zzu.edu.cn}
  \And
  Hongying Zan \\
  Zhengzhou University \\
  \texttt{iehyzan@zzu.edu.cn}
}
\begin{document}
 \maketitle

\begin{abstract}
Large Language Models (LLMs) struggle with complex Text-to-SQL queries that demand both sophisticated mathematical reasoning and intricate schema navigation. Existing methods often tackle these challenges in isolation, creating a fractured reasoning process that compromises logical and structural correctness. To resolve this, we introduce SteinerSQL, a framework that unifies these dual challenges into a single, graph-centric optimization problem. SteinerSQL operates in three stages: mathematical decomposition to identify required tables (terminals), optimal reasoning scaffold construction via a Steiner tree problem, and multi-level validation to ensure correctness. On the challenging LogicCat and Spider2.0-Lite benchmarks, SteinerSQL establishes a new state-of-the-art with 36.10\% and 40.04\% execution accuracy, respectively, using Gemini-2.5-Pro. Beyond accuracy, SteinerSQL presents a new, unified paradigm for Text-to-SQL, paving the way for more robust and principled solutions to complex reasoning tasks.
\end{abstract}

\section{Introduction}
Text-to-SQL generation converts natural-language questions into executable SQL, promising frictionless access to structured data \cite{yu2018spider,10.1007/s00778-022-00776-8,DBLP:journals/corr/abs-2406-08426}. While large language models (LLMs) achieve impressive performance on standard benchmarks \cite{10.14778/3641204.3641221,pourreza2025chasesql}, they face significant challenges when confronted with complex mathematical reasoning that requires multi-step analytical computations across interconnected database relationships \cite{lei2025spider,deng2025reforce}. Although recent advances suggest that modern LLMs can handle schema navigation more effectively when schema fit within context windows \cite{sun2023sql,maamari2024the,chung2025long}, mathematical reasoning introduces different complexities that require structured approaches to preserve computational dependencies across schema relationships \cite{zheng-etal-2024-archer,he2025star,stoisser2025sparks}.

The core challenge lies in mathematical reasoning complexity within SQL generation \cite{zheng-etal-2024-archer,he2025star}. Questions involving statistical analysis, multi-level aggregations, conditional mathematics, and cross-table numerical dependencies create reasoning spaces that overwhelm current LLM approaches \cite{guo2024evaluating,mitsopoulou2025analysis,yun2025seed}. Specifically, current LLM-based approaches struggle with such mathematical reasoning for two reasons: (1) \textbf{Mathematical decomposition complexity}: LLMs lack systematic frameworks for breaking down complex mathematical operations into coherent SQL analytical workflows \cite{tai2023exploring,guo2024evaluating,he2025star}, often generating queries that compute incorrect intermediate results or miss essential mathematical relationships \cite{10.14778/3641204.3641221,mitsopoulou2025analysis,zhai2025excot}. (2) \textbf{Schema navigation under mathematical constraints}: When mathematical operations require data from multiple tables, LLMs struggle to identify the optimal sufficient schema connections needed to support the computational logic \cite{he2025star,shen2025study}, leading to either incomplete reasoning paths that break mathematical dependencies or excessive joins that introduce computational noise.

These challenges naturally map to a graph optimization problem.
Prior work has shown that mathematical reasoning questions in
Text-to-SQL correspond to finding optimal sufficient connections
that preserve computational dependencies in the database schema
graph \citep{baik2019bridging,nascimento2025text}.
This correspondence emerges because mathematical operations
inherently require specific data relationships to be maintained—aggregations need their constituent elements \citep{fent2021practical}
and calculations demand complete computational chains from input to output \citep{weng2025graph,shen2025study}.
 \begin{figure*}[h]
    \centering
    \includegraphics[width=0.8\linewidth]{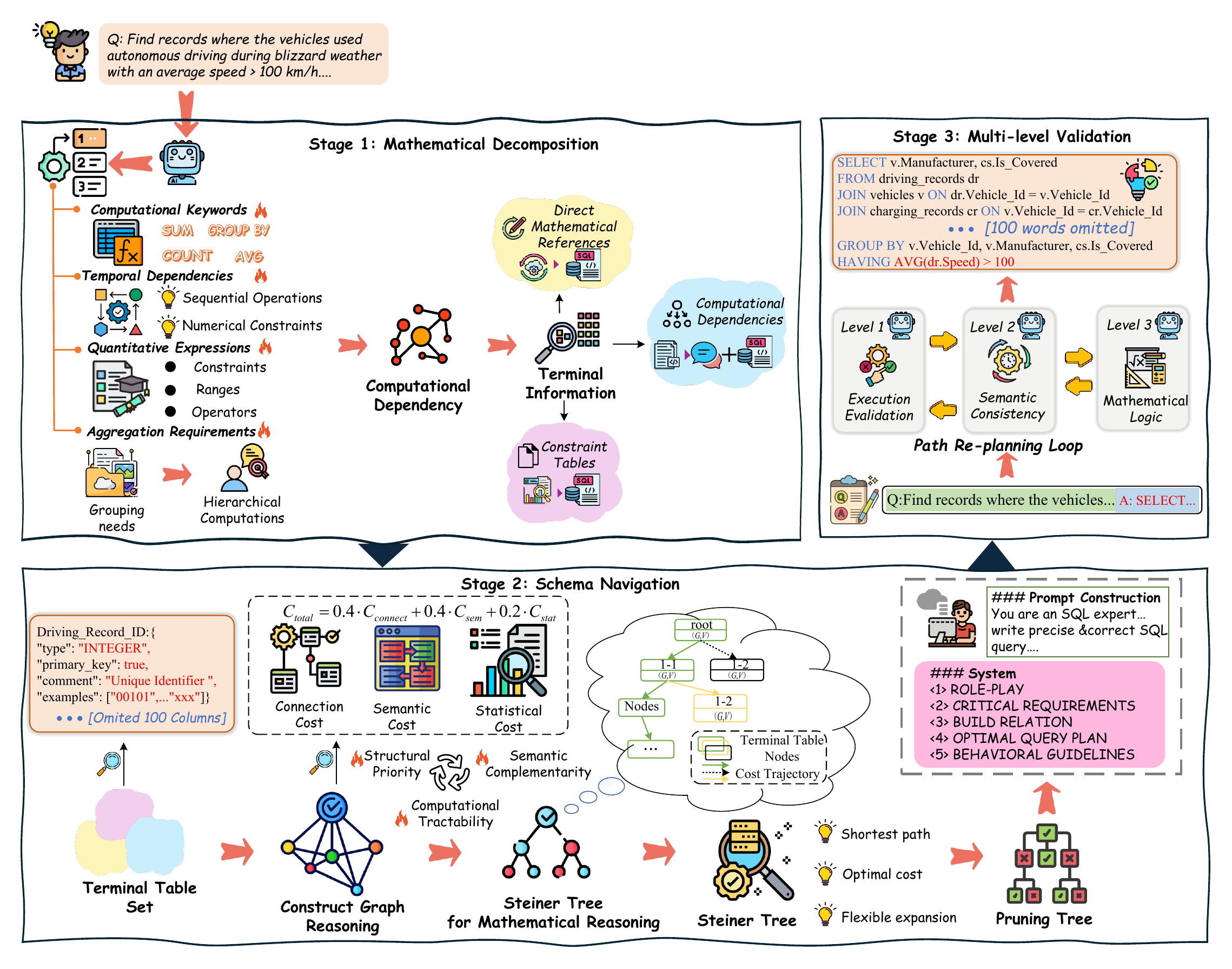}
    \caption{ An overview of the SteinerSQL framework. The workflow consists of three main stages: (1) Mathematical Decomposition, which deconstructs the user's question into formal computational requirements and identifies terminal tables; (2) Schema Navigation, which models the problem as a Steiner tree on the schema graph to find an optimal reasoning scaffold; and (3) Multi-level Validation, which verifies the generated SQL for correctness and triggers a re-planning loop if errors are detected.}
    \label{fig:framework_overview}
\end{figure*}

\newcommand{\cmark}{\ding{51}} 
\newcommand{\xmark}{\ding{55}} 
\newcommand{\tmark}{\ding{115}}

This paper makes the following key contributions:
\begin{itemize}
    \item We propose a novel approach that integrates mathematical decomposition, graph-based schema navigation, and multi-level validation into a cohesive pipeline. This method bridges the gap between logical reasoning and structural traversal by formulating the challenge as a Steiner tree optimization problem.
    \item Our framework establishes a new state-of-the-art on two challenging benchmarks, achieving 36.10\% execution accuracy on LogicCat and 40.04\% on Spider2.0-Lite. 
\end{itemize}

\section{Related Work}

\noindent \textbf{Structured Reasoning Gaps in Text-to-SQL.} We state that LLM-based Text-to-SQL models face two critical challenges: mathematical decomposition complexity and schema navigation under mathematical constraints. Recent approaches address each challenge separately. For mathematical reasoning, STaR-SQL achieves 31.6\% improvement through self-taught reasoning~\citep{he2025star}, while ExCoT improves execution accuracy from 57.37\% to 68.51\% on BIRD via execution-guided optimization~\citep{zhai2025excot}. For schema navigation, graph-based methods like TEMPLAR formulate join-path inference as Steiner tree search~\citep{baik2019bridging}, while Nascimento et al.~\citep{nascimento2025text} retrieves minimal Steiner subgraphs to constrain generation. These approaches solve challenges in isolation—mathematical reasoning improvements ignore schema structure, while graph-based navigation methods delegate arithmetic operations entirely to LLMs, making the system unable to solve complex problems that require both an understanding of data structure and precise mathematical calculations.

\section{The SteinerSQL Framework}

Our framework, \textbf{SteinerSQL}, builds a principled bridge between the abstract mathematical logic of a question and the concrete structure of a database. We achieve this by casting the discovery of a valid reasoning path as a Steiner tree problem~\cite{baik2019bridging,nascimento2025text} on the schema graph. The goal is to find a \textit{reasoning scaffold}---the lowest-cost subgraph that links all mathematically required tables (terminals) while preserving the full computational flow. This integrated approach is implemented as a three-stage pipeline: (i)~\textbf{Mathematical Decomposition}, (ii)~\textbf{Schema Navigation}, and (iii)~\textbf{Multi-Level Validation}. We demonstrated two case examples how this pipeline is executed in Appendix \ref{append:case}. 

\subsection{Overall Pipeline}
In the SteinerSQL framework, the LLM acts as the core reasoning agent, guided by a structured system prompt organized into five components: Role-Play, Critical Requirements, Build Relation, Optimal Query Plan, and Behavioral Guidelines shown in Figure \ref{fig:framework_overview}. This design creates a synergy between the LLM's language capabilities and the framework's deterministic graph algorithms. Initially, the LLM deconstructs the user's query to identify the required terminal tables. The framework's algorithmic core then takes these terminals, constructs a schema graph, and solves the Steiner tree problem to produce an optimal reasoning scaffold. This scaffold is subsequently passed back to the LLM, which translates the structured path into the final SQL query and performs a multi-level validation check to ensure correctness. The psuedocode of the LLM agent pipeline for this framework is shown in Algorithm \ref{algo:steiner_sql} in Appendix \ref{sec:appendixmethod}.

\subsection{Stage 1: Mathematical Decomposition}
This initial stage deconstructs the natural language question to identify all mathematical entities and their computational dependencies. This process establishes the formal requirements that will guide the construction of the reasoning scaffold.

\paragraph{Mathematical Entity Extraction.} We systematically extract mathematical entities from natural language questions through multi-dimensional analysis: (1) Computational Keywords: Identify mathematical operations (SUM, COUNT, AVG, etc.) and their target attributes; (2) Quantitative Expressions: Extract numerical constraints, ranges, and comparison operators; (3) Aggregation Requirements: Detect grouping needs and hierarchical computations; (4) Temporal Dependencies: Identify time-based calculations and sequential operations.

\paragraph{Computational Dependency Analysis.}
After extracting mathematical entities, we analyze their computational dependencies to understand the logical flow through three key steps shown in Algorithm \ref{algo:math}.

\paragraph{Terminal Table Identification.} Based on the mathematical entity and dependency analysis, we identify the essential tables, or terminals, that are required to fulfill the query's mathematical logic: (1) Direct Mathematical References: Tables containing attributes mentioned in mathematical expressions; (2) Computational Dependencies: Tables required for intermediate calculations; (3) Constraint Tables: Tables needed to satisfy mathematical constraints and filters.

Thus, this stage produces a formal set of terminal tables, denoted as $\mathcal{T}_{\text{req}}$, which represent the indispensable schema components for the query's mathematical logic. This set $\mathcal{T}_{\text{req}}$ serves as the direct input---the set of terminals---for the Steiner tree problem we will formally define and solve in the subsequent Schema Navigation stage.

\subsection{Stage 2: Schema Navigation}
This stage takes the set of terminal tables $\mathcal{T}_{\text{req}}$ from Stage 1 and navigates the database schema to construct an optimal reasoning scaffold. To achieve this in a principled manner, we first formally define the search space as a schema graph and then formulate our navigation task as a Steiner tree problem on this graph.

\subsubsection{Definitions}
\label{sec:schema}
\textbf{Definition 1 (Schema Graph).} Given a database schema $\mathcal{S} = (\mathcal{T}, \mathcal{R})$, we define the schema graph as a weighted undirected graph $G_{\mathcal{S}} = (V, E, C)$, where: (1) The set of vertices $V$ corresponds to the set of tables $\mathcal{T}$. (2) An edge $(t_i, t_j) \in E$ exists if a set of join predicates is met: a foreign key relationship exists between $t_i$ and $t_j$, or their highest column name/type similarity exceeds a reproducible threshold $\tau$. (3) $C$ is a cost function assigning a positive weight $C(e)$ to each edge $e \in E$, representing join dis-utility. Since $G_{\mathcal S}$ is finite and $w(e)\ge 0$, such a minimizer exists.

\textbf{Definition 2 (Steiner Tree on a Schema Graph).}
Given $G_{\mathcal S}=(V,E,C)$ and terminals $\mathcal T_{\text{req}}\subseteq V$, a Steiner tree is a connected, acyclic subgraph $T\subseteq G_{\mathcal S}$ that spans $\mathcal T_{\text{req}}$ and minimizes
\[
C_{\text{total}}(T)=\sum_{e\in E(T)}C(e),
\]
where $C(e)\ge 0$ and $C_{\text{total}}$ is additive.

\subsubsection{Modeling Reasoning Scaffolds}
\label{sec:modeling}
\noindent\textbf{Assumption.}
We assume (i) schema fidelity, (ii) terminal soundness, (iii) additive and monotone nonnegative edge costs, and (iv) operation locality.
See Appendix~\ref{app:steiner-proof} (A1--A4) for precise statements.
.

\begin{algorithm}[t]
\caption{Mathematical Dependency Analysis}
\label{algo:math}
\begin{algorithmic}
\STATE \textbf{Input:} Question $q$, Mathematical entities $\mathcal{E}_{\text{math}}$
\STATE \textbf{Output:} Dependency graph $G_{\text{dep}}$, Terminal set $\mathcal{T}_{\text{req}}$
\STATE Initialize: $G_{\text{dep}} \leftarrow$ empty graph, $\mathcal{T}_{\text{req}} \leftarrow \emptyset$

\STATE \textbf{// Step 1: Data Flow Analysis}
\FOR{each entity $e \in \mathcal{E}_{\text{math}}$}
    \STATE $attr_{\text{req}} \leftarrow$ \textsc{ExtractRequiredAttributes}($e$)
    \STATE $tables_{\text{direct}} \leftarrow$ \textsc{FindContainingTables}($attr_{\text{req}}$)
    \STATE $\mathcal{T}_{\text{req}} \leftarrow \mathcal{T}_{\text{req}} \cup tables_{\text{direct}}$
    \STATE Add dependency edges for attribute flow to $G_{\text{dep}}$
\ENDFOR

\STATE \textbf{// Step 2: Join Requirements Analysis}
\FOR{each pair $(t_i, t_j) \in \mathcal{T}_{\text{req}}$}
    \IF{\textsc{RequiresJoin}($t_i, t_j$) based on mathematical operations}
        \STATE $join_{\text{path}} \leftarrow$ \textsc{FindJoinPath}($t_i, t_j$)
        \STATE Add $join_{\text{path}}$ tables to $\mathcal{T}_{\text{req}}$
        \STATE Add join dependencies to $G_{\text{dep}}$
    \ENDIF
\ENDFOR

\STATE \textbf{// Step 3: Constraint Propagation}
\FOR{each constraint $c$ in mathematical entities}
    \STATE $affected_{\text{tables}} \leftarrow$ \textsc{PropagateConstraint}($c, \mathcal{T}_{\text{req}}$)
    \STATE Update $\mathcal{T}_{\text{req}}$ with constraint-required tables
    \STATE Add constraint edges to $G_{\text{dep}}$
\ENDFOR

\STATE \textbf{return} $G_{\text{dep}}$, $\mathcal{T}_{\text{req}}$
\end{algorithmic}
\end{algorithm}

\noindent\textbf{Theorem.} \textit{Under Assumption, the minimum Steiner tree on $G_{\mathcal{S}}$ with terminals $\mathcal{T}_{\text{req}}$ corresponds to the optimal (most efficient and relevant) reasoning scaffold required for the mathematical query.}

\noindent\textbf{Proof Sketch.} The mapping is direct: (1) \textbf{Terminals} $\mathcal{T}_{\text{req}}$ are the tables essential for the query's mathematical logic. (2) \textbf{Connectivity} in the Steiner tree ensures a valid join path exists for all required tables, a prerequisite for a valid SQL query. (3) \textbf{Cost Minimization} finds the closest set of intermediate tables and joins, minimizing complexity.

\subsubsection{Edge Cost Formulation}
\label{sec:cost}
We construct the schema graph $G_{\mathcal{S}}$ using a holistic cost function where lower values indicate better joins. To ensure consistency, all similarity and correlation metrics used in the cost components are normalized to the range $[0, 1]$. The total cost is an aggregation of structural, semantic, and statistical dissimilarity\footnote{We reproducibly add similarity edges using a frozen column-name encoder (\texttt{all-MiniLM-L6-v2}) and a rule-based type match. For tables $(t_i,t_j)$, compute
$s=\max_{c_i\in t_i,\,c_j\in t_j}\big[\alpha\,\cos(e(c_i),e(c_j))+(1-\alpha)\,\mathbf{1}\{\text{type}(c_i)=\text{type}(c_j)\}\big]$
with $\alpha=0.85$, and add $(t_i,t_j)$ iff $s\ge\tau$ with $\tau=0.75$.}:
\begin{equation}
C_{\text{total}} = \alpha \cdot C_{\text{connect}} + \beta \cdot C_{\text{sem}} + \gamma \cdot C_{\text{stat}}.
\label{eq:weight_formula}
\end{equation}
The cost components are:
 \begin{itemize}
 \item \textbf{Connection Cost ($C_{\text{connect}}$):} Measures structural distance based on foreign keys \cite{guo-etal-2019-towards,li2021putting}, name dissimilarity \cite{gan-etal-2023-appraising,zhang-etal-2023-nameguess}, and type incompatibility \cite{bernstein2011generic}. The table-pair similarities $\text{sim}_{\text{name}}$ and $\text{sim}_{\text{type}}$ are calculated by taking the maximum similarity score over column pairs between the tables.
 \begin{equation}
 \begin{aligned}
 C_{\text{connect}} &= w_1 \cdot \mathbb{I}_{\neg\text{FK}}(t_i, t_j) \\ & + w_2 \cdot (1 - \text{sim}_{\text{name}}(t_i, t_j)) \\
 &\quad + w_3 \cdot (1 - \text{sim}_{\text{type}}(t_i, t_j)).
 \end{aligned}
 \end{equation}
 where $\mathbb{I}_{\neg\text{FK}}$ is an indicator function that is 0 if a direct FK exists and 1 otherwise. We set the internal weights equally.

 \item \textbf{Semantic Cost ($C_{\text{sem}}$):} Captures semantic distance using SentenceTransformer embeddings (all-MiniLM-L6-v2) \cite{reimers2019sentence}. A table's embedding vector $\mathbf{e}_{t}$ is derived by averaging the embeddings of its table name and all of its column names.
 \begin{equation}
 C_{\text{sem}} = 1 - \cos(\mathbf{e}_{t_i}, \mathbf{e}_{t_j}).
 \end{equation}

 \item \textbf{Statistical Cost ($C_{\text{stat}}$):} Measures statistical implausibility from join selectivity \cite{wu2023factorjoin} and correlation \cite{zhu2023nocap,halford2020selectivity}.
\begin{equation}
\begin{aligned}
 C_{\text{stat}} &= w_4 \cdot (1 - \text{join\_selectivity}(t_i, t_j)) \\
 &\quad + w_5 \cdot (1 - \text{corr\_strength}(t_i, t_j)).
\end{aligned}
\end{equation}
The internal weights are set equally with $w_4=0.5$ and $w_5=0.5$.
 \end{itemize}

The weight selection for our multi-dimensional cost function is guided by the following theoretical principles: (1) Connection costs is important ($\alpha = \beta > \gamma$) because database integrity constraints provide the fundamental feasibility boundaries for any valid query \cite{codd1970relational,aho1979theory}; (2) Semantic costs should also exceed statistical costs ($\beta > \gamma$) as mathematical intent understanding is more critical than statistical optimization for correctness \cite{wang-etal-2022-improving-text,shen2025study}.

Based on these principles and empirical validation in Section \ref{sec:ablation}, we derive our choice: $\alpha = 0.4, \quad \beta = 0.4, \quad \gamma = 0.2$. This allocation reflects the hierarchical importance: structural feasibility, semantic alignment to statistical optimization.
\subsubsection{Steiner Tree Solution}
We use the Kou–Markowsky–Berman (KMB) 2-approximation algorithm~\cite{kou1981fast} to solve the Steiner Tree problem, which requires the graph distances to satisfy the triangle inequality. The solution process is as follows:
\begin{enumerate}
    \item \textbf{Metric Closure:} First, we compute the metric closure of $G_{\mathcal{S}}$ by running an all-pairs shortest path algorithm (Floyd-Warshall). This creates a new complete graph $G'_{\mathcal{S}}$ where the edge weight between any two nodes is the shortest path distance in $G_{\mathcal{S}}$.
    \item \textbf{MST on Terminals:} We construct a minimal spanning tree (MST) on the subgraph of $G'_{\mathcal{S}}$ induced by the terminal set $\mathcal{T}_{\text{req}}$.
    \item \textbf{Scaffold Construction:} The edges of the MST are mapped back to their corresponding shortest paths in the original graph $G_{\mathcal{S}}$. The union of these paths forms a subgraph that connects all terminals.
    \item \textbf{Pruning:} A final traversal removes any cycles to yield the final Steiner tree.
\end{enumerate}
This principled approach guarantees the resulting scaffold's cost is at most twice that of the optimal solution \cite{kou1981fast}. The time complexity of this approach is dominated by the Floyd-Warshall algorithm, resulting in $O(|V|^3)$, where $|V|$ is the number of tables in the schema. In cases where multiple paths or edges have identical costs during the shortest path or MST construction, we employ a deterministic tie-breaking rule based on the lexicographical order of the table and column names involved, ensuring reproducibility.
\subsection{Stage 3: Multi-level Validation}
To ensure final correctness, we employ a three-level validation process with precise checks designed to catch errors common in complex SQL queries. If a Level 2 or 3 error is detected, which indicates a flawed reasoning scaffold, our Path Re-planning Loop is triggered to generate a new scaffold with updated constraints.

\paragraph{Level 1: Execution Validation.} This foundational check verifies that the generated SQL is syntactically correct and executable against the database schema. It primarily catches surface-level errors such as invalid SQL syntax, or incorrect table and column names that would cause immediate database execution failures.

\paragraph{Level 2: Semantic Consistency Validation.} This level scrutinizes the query's logical alignment with the user's intent. Specific checks include verifying that all terminal tables identified in Stage 1 are present in the query's \texttt{FROM} or \texttt{JOIN} clauses, ensuring that join conditions use semantically appropriate columns and that the scaffold does not contain joins irrelevant to the query, and confirming that attributes in the \texttt{SELECT}, \texttt{WHERE}, and \texttt{HAVING} clauses correctly map to the entities mentioned in the natural language question.

\paragraph{Level 3: Mathematical Logic Validation.} This is the deepest level, auditing the computational structure of the query. The checks are highly specific: (1) enforcing the rule that any non-aggregated column in the \texttt{SELECT} list also appears in the \texttt{GROUP BY} clause; (2) validating that the aggregation functions used (e.g., \texttt{AVG}, \texttt{SUM}, \texttt{COUNT}) and the columns they apply to directly match the operations requested in the prompt; (3) Checking that numerical constraints are correctly translated into \texttt{WHERE} or \texttt{HAVING} clauses with proper operators and values.

\paragraph{Path Re-planning Loop.} When a semantic (Level 2) or mathematical (Level 3) error is detected, the framework treats it as a signal that the underlying reasoning scaffold is flawed. The validation error is translated into a new constraint for the graph search in Stage 2. 
\section{Experiments}
 \subsection{Experimental Setup}
 \subsubsection{Datasets}
We evaluate our work on four challenging Text-to-SQL benchmarks which are all avaliable in their paper \footnote{For the LogicCat benchmark, our experiments utilize the complete dataset, which is an extension of the publicly available version and is accessible upon request from the original authors.}: (1) \textbf{LogicCat \cite{2505.18744}:} a challenging text-to-SQL benchmark comprising 4,038 English questions focusing on complex multi-domain reasoning including physical knowledge, mathematical logic, commonsense reasoning, and hypothetical scenarios in real-world setting. We use the official public subset includes 2,369 questions for evaluation. (2) \textbf{Spider \cite{yu2018spider}}: A foundational, large-scale, cross-domain Text-to-SQL benchmark. It consists of 10,181 questions and 200 databases, focusing on complex query structures and testing a model's ability to generalize to new database schemas. We use the 1,034 questions from the test set for evaluation. (3) \textbf{Spider2.0-Lite \cite{lei2025spider}:} Spider 2.0 is a challenging benchmark that evaluates language models on 632 real-world enterprise text-to-SQL workflows. We use the derivative of Spider2.0-Lite which includes 547 questions and host on diverse databases such as Snowflake, BigQuery and SQLite. Our pipeline is adopted into successfully execution for this benchmarking evaluation. (4) \textbf{BIRD \cite{li2023can}:} BIRD is the classic benchmark for Text-to-SQL evaluation, which contains over 12,751 questions and 95 databases. We use the dev subset for evaluation which includes 1,534 questions.

 \subsubsection{Backbone LLMs}
To evaluate the generalization of our framework, we integrate SteinerSQL with four diverse, state-of-the-art LLMs. Our selection is consistent with contemporary literature \cite{pourreza2023dinsql,pourreza2025chasesql,2505.18744} in complex reasoning tasks: DeepSeek-R1-0528 \cite{2501.12948}, GPT-4o-2024-11-20 \cite{openai-4o}, GPT-4.1-2025-04-14 \cite{openai-4.1}, and Gemini-2.5-Pro-preview-05-06 \cite{google-gemini-2-5-pro}.

 \subsubsection{Model Comparison}
 Our method is compared against nine state-of-the-art Text-to-SQL methods across LogicCat, Spider2.0-Lite, and BIRD-dev datasets. These \textbf{published baselines} include DIN-SQL~\cite{pourreza2023dinsql}, DAIL-SQL~\cite{10.14778/3641204.3641221}, CHESS~\cite{2405.16755}, XiYanSQL-QwenCoder-32B~\cite{2411.08599}, AskData~\cite{shkapenyuk2025automatic}, CHASE-SQL~\cite{pourreza2025chasesql}, LinkAlign~\cite{wang2025linkalign}, OmniSQL-32B~\cite{2503.02240}, and ReFoRCE~\cite{deng2025reforce}, representing diverse methodological approaches including decomposed prompting, in-context learning, hierarchical encoding, and reinforcement learning.
 
 We also compare our SteinerSQL approach with \textit{standard prompting} baselines applied to our four backbone LLMs (DeepSeek-R1, GPT-4o, GPT-4.1, and Gemini-2.5-Pro) across all datasets\footnote{For LogicCat evaluation, we adopt the execution accuracy of the standard prompting with four backbones reported in published papers because we use the same prompting strategy. For Spider2.0-Lite specifically, we evaluate using the Spider-agent with our four backbones for standard prompting comparison.}. This standard prompting includes the same template of role-playing, critical requirements, and behavioral guidelines with chain-of-thought annotation (LogicCat), evidence and schema (BIRD and Spider) or external knowledge (Spider2.0-Lite) that the datasets provided, but without our specialized three-stage framework components.

 \subsubsection{Evaluation Metric}
 We adopt to use the widely-used metric: Execution Accuracy (EX) \cite{yu2018spider} as our primary metric. This evaluates where a query is correct only if its execution result matches that of the ground-truth query. For each example, we use the evaluation script to produce a score of either 0 or 1.

 \subsubsection{Implementation Details}
Experiments were conducted on a cloud-based server with two NVIDIA A100 GPUs. LLM inference was managed via the Hugging Face repository and official APIs. We set all LLMs with a temperature of 0, which is in greedy decoding settings. Hyperparameters of all baselines are default associated with their original implementation and all other hyperparameters of LLMs are default. All databases are hosted and managed using SQLite except Spider-Lite and we follow the official documentations for functions and usage.

\begin{table}[h]
\centering
\begin{threeparttable} 
\setlength{\tabcolsep}{0.4mm}
\begin{tabular}{lccccc}
\toprule
\textbf{Methods} & \textbf{LogicCat} & \textbf{Spider} & \textbf{Spider2.0} & \textbf{BIRD} \\
\midrule
\textit{Baselines} \\
DIN-SQL & 19.28†\tnote{a} & 85.30†\tnote{b} & 1.46†\tnote{b} & 50.72†\tnote{a} \\
DAIL-SQL & 23.71†\tnote{a} & 86.17†\tnote{b} & 5.68†\tnote{b} & 54.76†\tnote{a} \\
CHESS & 33.20†‡\tnote{a} & 87.23 & 3.84†\tnote{c} & 68.31†\tnote{a} \\
XiYanSQL & 20.11† & 89.65† & -- & 69.03† \\
MiniSeek & -- & \textbf{91.20†‡} & -- & -- \\
AskData & -- & -- & -- & \textbf{75.36†‡} \\
CHASE & 29.81†\tnote{a} & 87.62†\tnote{c} & -- & 74.90†\tnote{d} \\
LinkAlign & -- & -- & 33.09†‡ & -- \\
OmniSQL & 22.41† & 88.30 & -- & 72.05† \\
ReFoRCE & -- & -- & 30.35† & -- \\
\midrule
\textit{Standard} \\
DS-R1 & 28.17† & 86.00 & 24.50 & 66.69 \\
GPT-4o & 28.03† & 85.49 & 14.99 & 60.10 \\
GPT-4.1 & 30.06† & 85.30 & 13.35 & 64.80 \\
Gemini-2.5 & 29.26† & 86.27 & 26.87 & 70.08 \\
\midrule
\textit{SteinerSQL} \\
DS-R1 & 33.98 & 88.59 & 34.92 & 70.01 \\
GPT-4o & 32.00 & 87.43 & 25.05 & 64.93 \\
GPT-4.1 & 35.04 & 87.23 & 26.33 & 67.08 \\
Gemini-2.5 & \textbf{36.10} & 88.30 & \textbf{40.04} & 73.92 \\
\bottomrule
\end{tabular}
\caption{Main results on Execution Accuracy (EX) in percentages (\%) across datasets. Our method is presented with four different backbone LLMs. † indicates results from published papers/leaderboards. ‡ indicates current state-of-the-art on that dataset. -- indicates results not available.}
\label{tab:main_results}
\begin{tablenotes}[para,flushleft] 
    \small
    \textit{Backbone LLMs for baselines:}
    \item[a] GPT-4o
    \item[b] GPT-4
    \item[c] Gemini-1.5-Pro
    \item[d] Gemini-2.5-Pro
\end{tablenotes}
\end{threeparttable}
\end{table}

 \subsection{Main Results}

\subsubsection{Performance Comparison}
As shown in Table~\ref{tab:main_results}, our method, SteinerSQL, significantly outperforms existing state-of-the-art (SOTA) frameworks and standard prompting baselines across all datasets and backbone LLMs.

On the complex reasoning benchmark LogicCat, SteinerSQL (Gemini-2.5-Pro) establishes a new SOTA with 36.10\% EX. On Spider2.0-Lite, it also achieves a new SOTA of 40.04\% EX, surpassing the previous best by over 3.6\%. Furthermore, our method demonstrates highly competitive performance on BIRD-dev and Spider-test with 73.92\% EX and 88.59\% EX, closely approaching the current SOTA.

 \begin{figure}[h]
     \centering
     \includegraphics[width=\linewidth]{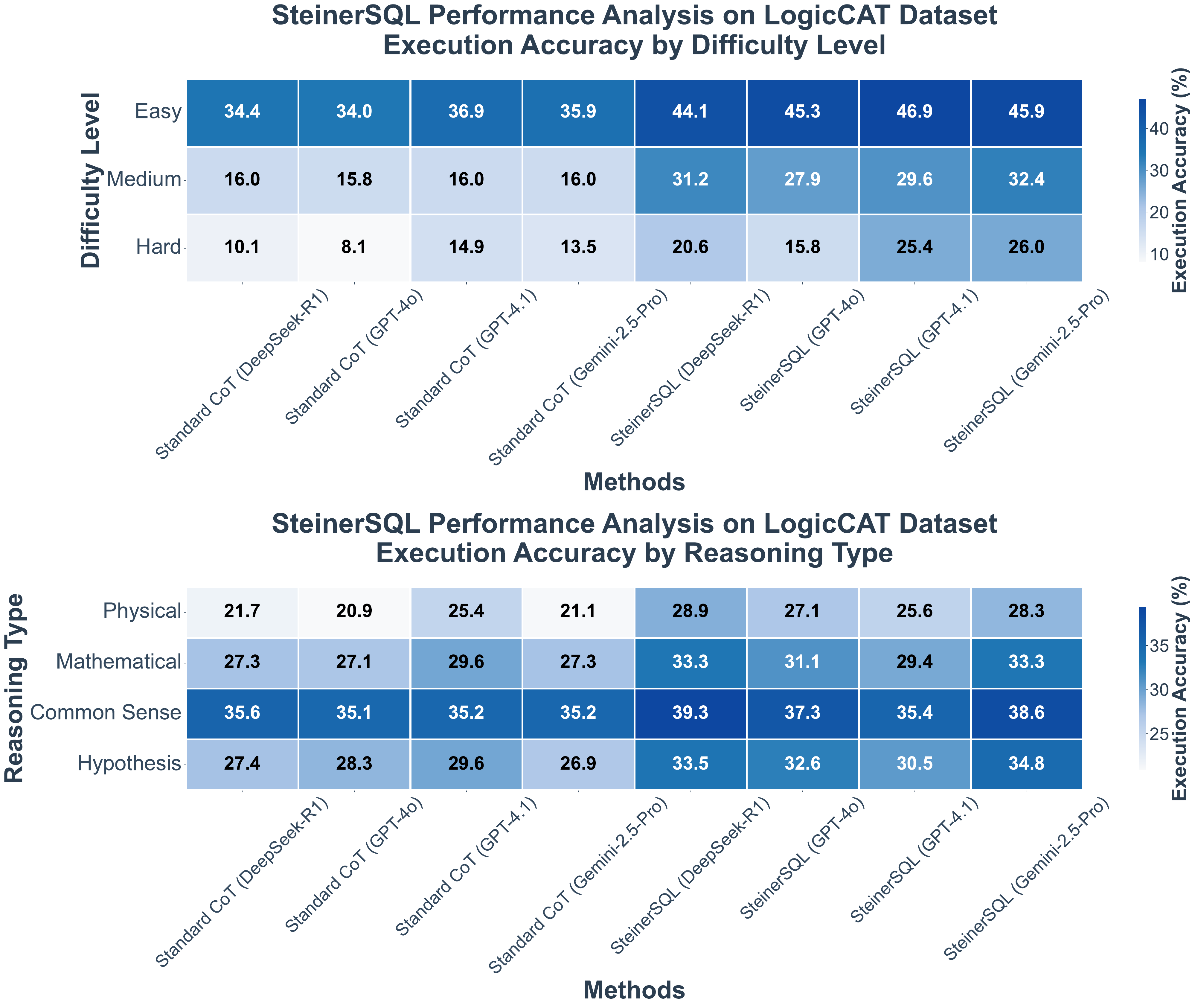}
     \caption{Performance heatmap of different methods on the LogicCat dataset across three difficulty levels (Easy, Medium, Hard) and different reasoning types. The heatmap reveals that SteinerSQL methods consistently outperform all standard prompting baselines across difficulty levels and reasoning types.}
     \label{fig:LogicCat_heatmap}
 \end{figure}  
Figure~\ref{fig:LogicCat_heatmap} details our performance on LogicCat, where the advantage of SteinerSQL is most pronounced on the Hard difficulty subset. On these challenging queries, our methods achieve accuracies up to 26.0\%, substantially outperforming standard prompting baselines. This highlights our workflow's superior ability to handle complex queries. When analyzed by reasoning category, SteinerSQL shows significant gains in tasks requiring mathematical and hypothesis-based reasoning, improving performance by up to 6 absolute points over standard methods.

 \begin{figure}[htbp]
     \centering
     \includegraphics[width=\linewidth]{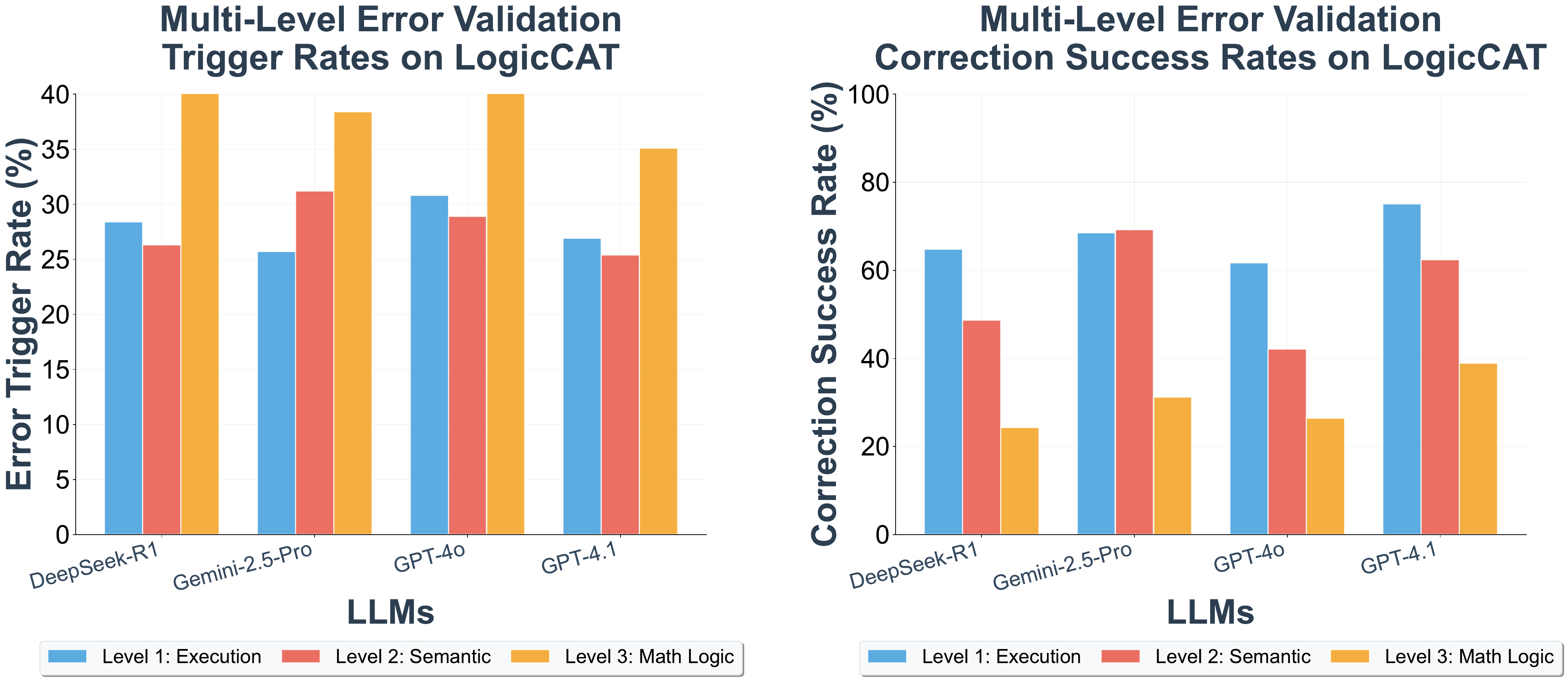}
     \caption{Analysis of the multi-level validation and correction mechanism on LogicCat. (Left) Trigger rates for each validation level, showing that semantic and mathematical logic errors are common. (Right) Correction success rates, demonstrating the effectiveness of our targeted correction strategies for each error type.}
     \label{fig:error_validation_analysis}
 \end{figure}

 \begin{table*}[htbp]
\centering

\small
\begin{tabular}{l cc cc cc cc}
\toprule
& \multicolumn{2}{c}{\textbf{DeepSeek-R1}} & \multicolumn{2}{c}{\textbf{GPT-4o}} & \multicolumn{2}{c}{\textbf{GPT-4.1}} & \multicolumn{2}{c}{\textbf{Gemini-2.5-Pro}} \\
\cmidrule(lr){2-3} \cmidrule(lr){4-5} \cmidrule(lr){6-7} \cmidrule(lr){8-9}
\textbf{Configuration} & \textbf{EX} & \textbf{Drop} & \textbf{EX} & \textbf{Drop} & \textbf{EX} & \textbf{Drop} & \textbf{EX} & \textbf{Drop} \\
\midrule
\textbf{Full Model (SteinerSQL)} & \textbf{33.98} & - & \textbf{32.00} & - & \textbf{35.04} & - & \textbf{36.10} & - \\
\midrule
\textit{Core Stage Isolation} \\
- Stage 2+3 Only (w/o Decomposition) & 27.48 & 6.50 & 27.48 & 4.52 & 30.41 & 5.37 & 29.42 & 5.68 \\
- Stage 1+3 Only (w/o Navigation) & 28.13 & 5.85 & 25.46 & 6.54 & 29.17 & 5.87 & 30.01 & 6.09 \\
- Stage 1+2 Only (w/o Validation) & 28.24 & 5.74 & 26.83 & 5.17 & 30.29 & 4.75 & 31.44 & 4.66 \\
\midrule
\textit{Stage 2 Ablation} \\
- w/o Connection Cost & 31.53 & 2.45 & 29.68 & 2.32 & 33.19 & 1.85 & 33.83 & 2.27 \\
- w/o Semantic Cost & 32.08 & 1.90 & 30.22 & 1.78 & 32.63 & 2.41 & 34.64 & 1.46 \\
- w/o Statistical Cost & 32.40 & 1.58 & 30.86 & 1.14 & 35. & 0.89 & 34.28 & 1.82 \\
- Remove Graph Module & 29.17 & 4.81 & 26.29 & 5.71 & 29.83 & 5.21 & 30.87 & 5.23 \\
- w/o Structured Examples & 29.69 & 4.29 & 26.94 & 5.06 & 29.50 & 5.54 & 31.19 & 4.91 \\
\bottomrule
\end{tabular}
\caption{Comprehensive ablation study of SteinerSQL components across four backbone models on the LogicCat dataset. We report Execution Accuracy (EX) and the absolute performance drop in percentages (\%).}
\label{tab:comprehensive_ablation_four_backbones}
\end{table*}
\subsubsection{Error Validation Analysis}
    
Our error validation analysis confirms that the primary challenges in complex Text-to-SQL are logical, not syntactic. As shown in Figure~\ref{fig:error_validation_analysis}, the high rate of semantic (Level 2) and math logic (Level 3) errors validates our central premise that failures stem from flawed mathematical decomposition and schema navigation, rendering simple execution-only validation insufficient. Our results demonstrate the practical value of multi-level validation, particularly highlighting our principled path re-planning loop as essential for resolving these deep reasoning failures where simpler feedback mechanisms would fall short. This analysis confirms that a sophisticated, multi-level approach to error validation is not merely beneficial but indispensable for tackling the true complexity of advanced Text-to-SQL tasks.

\subsection{Ablation and Component Analysis}
\label{sec:ablation}

\paragraph{Component Contribution.}
\label{sec:component}

Our ablation study, summarized in Table \ref{tab:comprehensive_ablation_four_backbones},allows us to analyze the distinct role and contribution of each component within our framework. The results confirm that while the integrated system achieves optimal performance, each stage offers significant, stand-alone value. The Schema Navigation stage is the primary driver of performance, with its removal causing the largest drop (6.09\%). This highlights its central role in structuring the reasoning path. The Math Decomposition stage provides the essential logical foundation for this navigation, and its removal also leads to a major performance decrease (5.52\%). This suggests that our decomposition approach could potentially benefit other graph-based Text-to-SQL systems by improving their input formulation. Finally, the Multi-level Validation stage acts as a powerful refinement module, responsible for a 5.08\% gain, demonstrating its utility as a post-processing mechanism. Further analysis shows that the Graph Module (5.24\% drop) and the use of Structured Examples (4.95\% drop) both serve as powerful guidance mechanisms. Within the graph, the different cost functions provide finer-grained signals for path selection. 


 \begin{figure}[t]
     \centering
     \includegraphics[width=\linewidth]{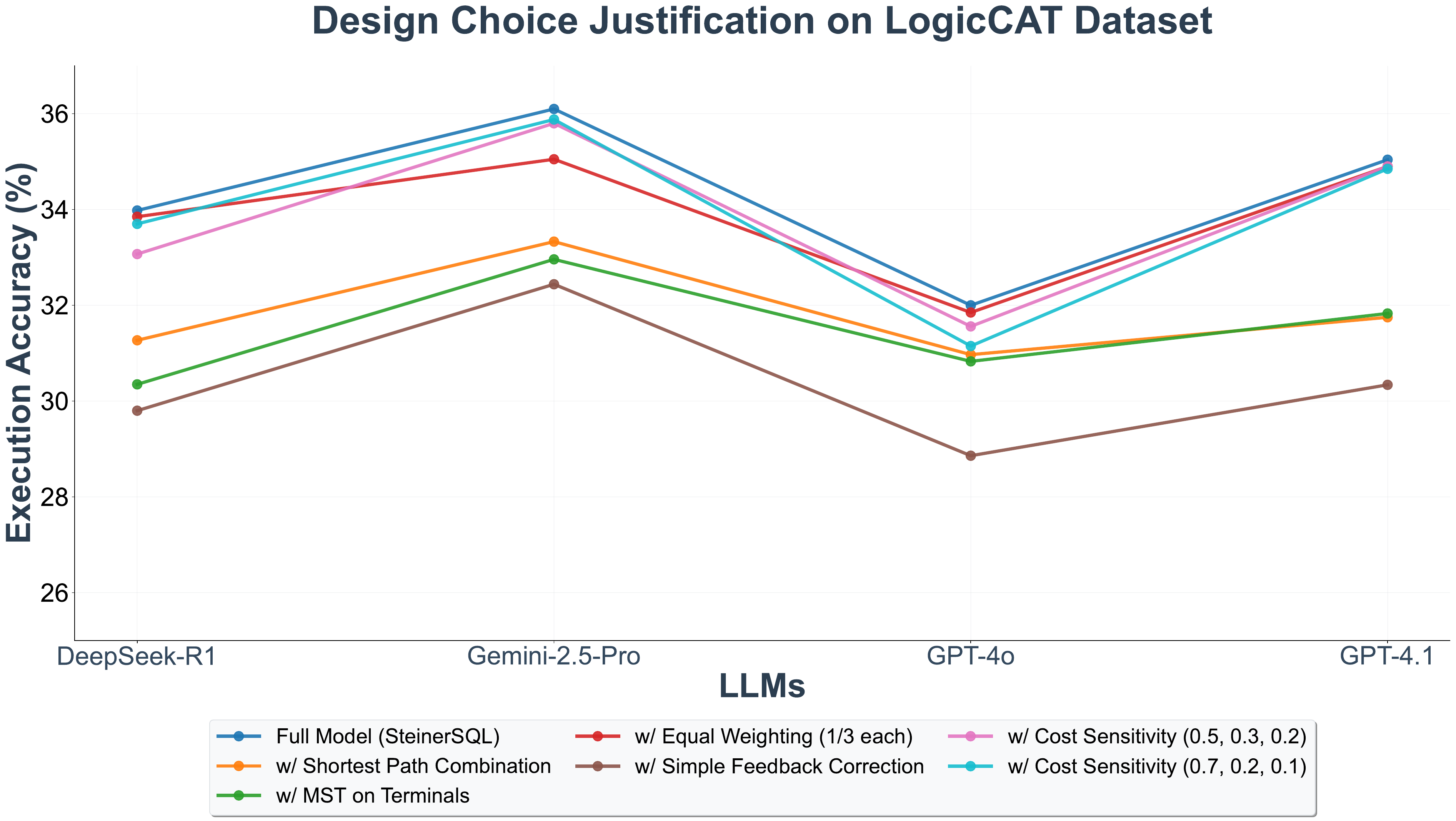}
     \caption{Design choice justification on the LogicCat dataset. The full SteinerSQL model consistently outperforms variants using alternative path-finding algorithms (Shortest Path Combination, MST on Terminals), cost sensitivity, and correction strategies (Simple Feedback). }
     \label{fig:choicejustify}
 \end{figure}

\paragraph{Design Choice Justification.}

Figure~\ref{fig:choicejustify} validates our core design choices by comparing the full model against variants using alternative algorithms.
Our model consistently outperforms simpler graph traversal algorithms, including Shortest Path Combination and MST on Terminals, by a margin of 1-3 percentage points across all tested LLMs. This validates that Steiner trees provide a more accurate model in delivering sophisticated SQL queries. Also, the use of a principled, weighted cost function describe in Section \ref{sec:cost} surpasses other weighting. Finally, our multi-level validation mechanism consistently outperforms a Simple Feedback loop, proving that a structured approach of validation is more effective.

\section{Conclusion}
In this work, we introduced SteinerSQL, a framework that bridges the critical gap between mathematical reasoning and schema navigation in complex Text-to-SQL generation. This method establishes new state-of-the-art on challenging benchmarks. Collectively, our work introduces a new, principled paradigm for complex Text-to-SQL that demonstrates the transformative potential of integrated optimization approaches in building more powerful reasoning systems for SQL generation.

\section{Limitation}
While SteinerSQL demonstrates strong performance, we identify two areas for future exploration. First, the framework's initial mathematical decomposition relies on the reasoning capabilities of the backbone LLM. Although our multi-level validation mitigates errors, advancements in more structured or hybrid decomposition techniques could further improve the initial identification of terminal tables. Second, our edge cost function, while principled, uses fixed weights. Future work could explore adaptive weighting schemes that dynamically adjust to the specific query or database schema, potentially offering more precise path-finding.

\bibliography{acl_latex}

\appendix
\section{Methodology Appendix}
\label{sec:appendixmethod}
We demonstrate the LLM agent pipeline for the SteinerSQL framework in Algorithm \ref{algo:steiner_sql} and shown the proof of the theorem proposed in Section \ref{sec:modeling}. 

\begin{algorithm*}[t]
\caption{SteinerSQL LLM Agent Framework Pipeline}
\label{algo:steiner_sql}
\begin{algorithmic}
\STATE \textbf{Input:} Natural language question $q$, Database schema $\mathcal{S} = (\mathcal{T}, \mathcal{R})$
\STATE \textbf{Output:} Valid SQL query $Q_{sql}$ or error message

\STATE \textbf{// Stage 1: Mathematical Decomposition}
\STATE $\mathcal{E}_{\text{math}} \leftarrow$ \textsc{LLM-ExtractMathEntities}($q$)
\STATE $\mathcal{T}_{\text{req}} \leftarrow$ \textsc{MathematicalDependencyAnalysis}($q, \mathcal{E}_{\text{math}}$)

\FOR{$iteration = 1$ to $3$}
    \STATE \textbf{// Stage 2: Schema Navigation}
    \STATE $G_{\mathcal{S}} \leftarrow$ \textsc{ConstructSchemaGraph}($\mathcal{S}$) \COMMENT{Create vertices $V = \mathcal{T}$}
    \STATE Add edges $(t_i, t_j)$ if FK exists or $\text{sim}(t_i, t_j) \geq 0.75$
    \STATE Compute edge costs: $C = 0.4 \cdot C_{\text{connect}} + 0.4 \cdot C_{\text{sem}} + 0.2 \cdot C_{\text{stat}}$
    
    \STATE \textbf{// Steiner Tree Solution}
    \STATE $G'_{\mathcal{S}} \leftarrow$ \textsc{FloydWarshall}($G_{\mathcal{S}}$) \COMMENT{Metric closure}
    \STATE $MST \leftarrow$ \textsc{MinSpanningTree}($G'_{\mathcal{S}}[\mathcal{T}_{\text{req}}]$)
    \STATE $T_{steiner} \leftarrow$ \textsc{MapToOriginalPaths}($MST, G_{\mathcal{S}}$) and prune cycles
    
    \STATE \textbf{// LLM Query Generation with Structured Prompt}
    \STATE $prompt \leftarrow$ \textsc{BuildSystemPrompt}($T_{steiner}$) \COMMENT{5 components: role, requirements, relations, plan, guidelines}
    \STATE $Q_{sql} \leftarrow$ \textsc{LLM-GenerateSQL}($prompt, q$)
    
    \STATE \textbf{// Stage 3: Multi-level Validation}
    \STATE $level1 \leftarrow$ \textsc{CheckSyntaxAndExecutability}($Q_{sql}, \mathcal{S}$)
    \IF{NOT $level1$}
        \STATE \textbf{return} "Syntax Error"
    \ENDIF
    
    \STATE $level2 \leftarrow$ \textsc{CheckSemanticConsistency}($Q_{sql}, q, \mathcal{T}_{\text{req}}$)
    \STATE $level3 \leftarrow$ \textsc{CheckMathLogic}($Q_{sql}, q, \mathcal{E}_{\text{math}}$)
    
    \IF{$level2$ AND $level3$}
        \STATE \textbf{return} $Q_{sql}$
    \ELSE
        \STATE $\mathcal{T}_{\text{req}} \leftarrow$ \textsc{UpdateTerminals}($\mathcal{T}_{\text{req}}, level2, level3$)
    \ENDIF
\ENDFOR

\STATE \textbf{return} "Max iterations reached"
\end{algorithmic}
\end{algorithm*}

\subsection{Proof}
\label{app:steiner-proof}

We formalize the correspondence between minimum Steiner trees on the schema graph and optimal reasoning scaffolds for answering a mathematical query. Let $G_{\mathcal S}=(V,E,C)$ be the schema graph from the main text. For notational convenience, we set $w \equiv C$ so that $w(e)=C(e)$ for all $e\in E$, and we assume $w(e)\ge 0$. We also write
\[
C_{\text{total}}(H)\;:=\;\sum_{e\in E(H)} w(e)
\]
for any subgraph $H\subseteq G_{\mathcal S}$.

Edges $E$ comprise (i) declared foreign-key/primary-key (or other well-typed equi-join) links and (ii) reproducible similarity-based links whose maximum column-name/type similarity is at least $\tau$, as in Definition~1 of the main text.

All statements below are conditioned on Assumption~1 (A1--A4) from the main text: schema fidelity, terminal soundness, additive and monotone surrogate edge costs, and operation locality.

\noindent\emph{Existence.}
Since $|V|<\infty$ and $w(e)\ge 0$, a minimum-cost connected subgraph spanning $\mathcal T_{\text{req}}$ (hence a minimum Steiner tree) exists.

\paragraph{Assumption.}
We rely on the following standard conditions.
\begin{enumerate}
\item \textbf{Schema fidelity.} If a set of tables induces a connected subgraph of $G_{\mathcal S}$, then there exists a well-typed equi-join plan over those tables; conversely, every well-typed join plan over base tables uses only edges present in $G_{\mathcal S}$.
\item \textbf{Terminal soundness.} Every correct answer to the query depends on attributes drawn from all tables in $\mathcal{T}_{\text{req}}$ (no proper subset suffices).
\item \textbf{Additive, monotone surrogate cost.} The complexity/efficiency proxy of a reasoning scaffold decomposes additively over used joins:
\[
\mathrm{cost}(\text{scaffold}) \;=\; \sum_{e\in E(\text{scaffold})} w(e),
\]
and adding edges/joins does not decrease the cost (i.e., the surrogate is monotone in edge inclusion).
\item \textbf{Operation locality.} Selections, projections, grouping, and aggregation required by the query can be implemented on top of any join plan that correctly connects $\mathcal{T}_{\text{req}}$; they do not require additional base tables beyond those appearing in the scaffold.
\end{enumerate}

\begin{definition}[Reasoning scaffold]
A reasoning scaffold for the query is any connected subgraph $H=(V_H,E_H)$ of $G_{\mathcal S}$ such that $\mathcal{T}_{\text{req}}\subseteq V_H$ and there exists a well-typed SQL plan whose FROM/JOIN component uses exactly $E_H$ and whose SELECT/WHERE/GROUP BY/HAVING components implement the query's non-join operators. An \emph{optimal} scaffold is one minimizing $\mathrm{cost}(H)=\sum_{e\in E_H} w(e)$.
\end{definition}

\begin{definition}[Steiner tree]
A (vertex-)Steiner tree for $(G_{\mathcal S},\mathcal{T}_{\text{req}})$ is a connected, cycle-free subgraph $T=(V_T,E_T)$ of $G_{\mathcal S}$ that spans $\mathcal{T}_{\text{req}}$ and is of minimum total weight $\sum_{e\in E_T} w(e)$ among all connected subgraphs that contain $\mathcal{T}_{\text{req}}$.
\end{definition}

We prove the theorem via (i) soundness of Steiner trees as valid scaffolds, (ii) completeness of scaffolds as terminal-connecting subgraphs, and (iii) optimality via cost alignment.

\subsubsection{Soundness: Steiner trees yield valid scaffolds}
\begin{lemma}[Soundness]
\label{lem:soundness}
Let $T$ be any Steiner tree for $(G_{\mathcal S},\mathcal{T}_{\text{req}})$. Then $T$ is a valid reasoning scaffold.
\end{lemma}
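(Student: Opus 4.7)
The plan is to verify, one by one, the three clauses in the definition of a reasoning scaffold for the given Steiner tree $T$, each following from a combination of the Steiner-tree definition and Assumptions A1--A4 of the main text.

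First, I would dispatch the two structural clauses directly from definitions. A Steiner tree $T=(V_T,E_T)$ is, by definition, a connected subgraph of $G_{\mathcal S}$ that spans $\mathcal{T}_{\text{req}}$, which immediately supplies both connectivity and the inclusion $\mathcal{T}_{\text{req}}\subseteq V_T$ required of a reasoning scaffold with $V_H=V_T$ and $E_H=E_T$. No assumption is needed at this step; everything is inherited from the object itself.

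Second, I would invoke Schema Fidelity (A1) to promote the combinatorial tree into an executable join plan. Because $T$ is a connected subgraph of $G_{\mathcal S}$, its vertex set induces a connected subgraph and A1 guarantees the existence of a well-typed equi-join plan over $V_T$. The acyclicity of $T$ is the key lever for matching the ``uses exactly $E_H$'' clause: I would root $T$ at an arbitrary vertex and serialize the joins along the tree edges, producing a plan whose FROM/JOIN component uses precisely $E_T$. Each edge in $E_T$ supports a well-typed equi-join because, by Definition~1, it is either a declared foreign-key link or a similarity link above the threshold $\tau$, and A1 treats both as admissible primitives. I would then invoke Operation Locality (A4) to handle the non-join operators: since the join plan above correctly connects all terminals and $\mathcal{T}_{\text{req}}\subseteq V_T$, A4 ensures that the selections, projections, grouping, and aggregation demanded by the query can be layered on top without recruiting any additional base tables outside $V_T$. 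Combining the two steps, the resulting SQL plan is well-typed, uses exactly $E_T$ for its joins, and implements every non-join operator of the query; hence $T$ meets all three clauses of the scaffold definition.

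I do not expect a real obstacle here, since the claim is essentially bookkeeping once the assumptions are in hand. The one subtlety worth flagging is the ``uses exactly $E_H$'' requirement, which is what forces me to exploit the acyclicity of $T$ rather than just its connectivity: any extra edge of $G_{\mathcal S}$ on the vertex set $V_T$ would create a cycle and would not correspond to the chosen tree. Terminal Soundness (A2) and the additive monotone cost assumption (A3) play no role in soundness; they are reserved for the companion completeness lemma and the cost-alignment step of the main theorem.
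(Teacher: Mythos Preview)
Your proposal is correct and follows essentially the same route as the paper: extract connectivity and terminal inclusion from the Steiner-tree definition, invoke A1 for the join plan, and invoke A4 for the non-join operators, while noting that A2 and A3 are not needed. Your treatment is in fact more careful than the paper's on the ``uses exactly $E_H$'' clause---the paper simply asserts that A1 yields a plan using exactly $E_T$, whereas you explicitly exploit acyclicity to serialize the joins along tree edges---but this is an elaboration within the same argument, not a different approach.
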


\begin{proof}
By definition, $T$ is connected and $\mathcal{T}_{\text{req}}\subseteq V_T$. By (A1) Schema fidelity, any connected subgraph of $G_{\mathcal S}$ induces a well-typed equi-join plan over its vertices using exactly $E_T$. By (A4) Operation locality, the non-join operators of the mathematical query (selections, projections, grouping, aggregation) can be layered on top of this join plan without introducing new base tables. Hence there exists a complete SQL plan whose JOIN component uses $E_T$ and that computes the desired quantity, making $T$ a valid scaffold.
\end{proof}

\subsubsection{Completeness: Every valid scaffold connects the terminals}
\begin{lemma}[Completeness]
\label{lem:completeness}
Let $H$ be any valid reasoning scaffold. Then $H$ is a connected subgraph of $G_{\mathcal S}$ that spans $\mathcal{T}_{\text{req}}$.
\end{lemma}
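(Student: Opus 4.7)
The plan is to show that the two structural properties built into Definition~3---connectivity and containment of $\mathcal{T}_{\text{req}}$---are in fact forced by \emph{validity} of the scaffold under Assumption~(A1)--(A4), rather than being stipulative. Accordingly, I would structure the argument as two short, essentially independent claims, each drawing on a single assumption, and each proved by contradiction.

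First I would handle terminal containment. Since $H$ is a valid scaffold, there is a well-typed SQL plan $P_H$ whose FROM/JOIN uses exactly $E_H$, whose non-join operators are layered on top by (A4) operation locality, and whose output equals the correct answer. The base tables referenced by $P_H$ are precisely $V_H$. If some $t\in\mathcal{T}_{\text{req}}$ were missing from $V_H$, then $P_H$ would compute its answer from attributes drawn only from $V_H\subsetneq \mathcal{T}_{\text{req}}\cup(V_H\setminus\mathcal{T}_{\text{req}})$, i.e.\ from a strict subset of $\mathcal{T}_{\text{req}}$ augmented with non-terminal intermediaries; but (A2) terminal soundness says no such strict subset can suffice, contradicting the fact that $P_H$ returns the correct answer. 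Hence $\mathcal{T}_{\text{req}}\subseteq V_H$.

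Next I would prove connectivity by contradiction. Suppose $H=H_1\sqcup H_2$ with both components nonempty and no edge of $G_{\mathcal S}$ crossing. By (A1) schema fidelity, any well-typed equi-join between a table in $V(H_1)$ and a table in $V(H_2)$ would have to correspond to an edge of $G_{\mathcal S}$; since no such edge exists in $H$, the FROM/JOIN portion of $P_H$ can connect the two sides only by a Cartesian product. Two cases then arise. Either the plan avoids the cross product, in which case the output is a pair of independent intermediate relations that cannot be combined into the single scalar/tabular answer required---violating well-typedness under (A1)---or the plan commits to the cross product, in which case row multiplicities across $V(H_1)$ and $V(H_2)$ are inflated. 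Because (A4) operation locality forbids introducing new base tables in the SELECT/WHERE/GROUP BY/HAVING layer, no subsequent filter or aggregate can restore the correct matching between rows of the two sides, and the resulting answer disagrees with the correct one on generic instances, again contradicting (A2). Either branch yields a contradiction, so $H$ must be connected.

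The main obstacle I anticipate is the second case above: rigorously ruling out that the post-join operators could ``accidentally'' repair a Cartesian-product answer. I would pin this down by observing that (A4) constrains the non-join layer to act only on rows already produced by the FROM/JOIN, so the inflated multiplicities introduced across the split propagate untouched into every aggregate, and then invoking (A2) in its strongest form---no proper subset of $\mathcal{T}_{\text{req}}$ suffices---to conclude that a plan whose join graph is disconnected cannot be valid. With that detail settled, both clauses of the lemma are in hand, matching exactly the shape required for the forthcoming cost-alignment step.
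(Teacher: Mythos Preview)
Your argument is sound and follows the same two-pronged strategy as the paper---(A2) for terminal containment and (A1) for connectivity---but you supply considerably more detail than the paper does. The paper's proof is two sentences: it simply asserts that a well-typed join plan over $E_H$ ``requires connectivity among the used tables'' (via A1) and that terminal soundness (A2) forces $\mathcal{T}_{\text{req}}\subseteq V_H$, with no case analysis of Cartesian products or discussion of post-join repairs. It is worth noting that Definition~3 already stipulates that a reasoning scaffold is connected and contains $\mathcal{T}_{\text{req}}$, so the paper's proof is really a justification that these definitional clauses are not arbitrary, rather than a derivation from weaker hypotheses---which is exactly the framing you adopt in your opening sentence. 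Your more careful treatment of the disconnected case, including the acknowledged difficulty of ruling out WHERE-clause repairs of a Cartesian product, goes beyond what the paper attempts; it is correct under the natural reading that ``the FROM/JOIN component uses exactly $E_H$'' constrains the plan's effective join graph and not merely its surface syntax, so a selection predicate that simulates a cross-component equi-join would already violate that clause rather than (A4).
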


\begin{proof}
By definition of scaffold and (A1), $H$ is a subgraph of $G_{\mathcal S}$ over which a well-typed join plan exists; such a plan requires connectivity among the used tables, hence $H$ is connected. By (A2) Terminal soundness, any correct answer depends on all tables in $\mathcal{T}_{\text{req}}$, so these must appear in the FROM/JOIN portion; therefore $\mathcal{T}_{\text{req}}\subseteq V_H$.
\end{proof}

\subsubsection{Cost preservation and cycle pruning}
\begin{lemma}[Cycle pruning]
\label{lem:cycle-pruning}
\begin{proof}
If $H$ contains a cycle $C$, remove any edge $e\in C$. In an undirected graph, deleting one edge on a cycle preserves connectivity; thus $H\setminus\{e\}$ remains connected and still spans $\mathcal T_{\text{req}}$.
Because $w(e)\ge 0$, the cost does not increase and strictly decreases if $w(e)>0$.
Iterate until no cycles remain to obtain $H'$.
\end{proof}

\end{lemma}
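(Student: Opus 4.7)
The plan is to establish the Cycle Pruning lemma by iteratively deleting one edge per cycle until the subgraph is acyclic, leaning on two elementary graph-theoretic facts: in an undirected graph, removing an edge that lies on some cycle preserves connectivity; and a connected acyclic graph is a tree. Combined with the nonnegativity of edge weights $w(e)\ge 0$ guaranteed by Definition~1 and the additivity of $C_{\text{total}}$ from (A3), these facts will yield a cost-non-increasing reduction from an arbitrary connected terminal-spanning subgraph to a tree spanning the same vertex set.

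First I would formalize the procedure. Given a connected subgraph $H\subseteq G_{\mathcal S}$ with $\mathcal T_{\text{req}}\subseteq V(H)$, if $H$ is acyclic we set $H':=H$ and stop. Otherwise, using a deterministic search (consistent with the lexicographic tie-breaking already fixed in the Steiner Tree Solution), select a cycle $C\subseteq H$ and an edge $e\in E(C)$. Form $H_1:=H\setminus\{e\}$. Because $e$ lies on $C$, any walk in $H$ that used $e$ can be rerouted along $C\setminus\{e\}$, so $H_1$ remains connected; no vertex is deleted, so $\mathcal T_{\text{req}}\subseteq V(H_1)$ still holds; and additivity gives
\[
C_{\text{total}}(H_1)=C_{\text{total}}(H)-w(e)\le C_{\text{total}}(H),
\]
with strict inequality whenever $w(e)>0$.

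Next I would handle termination and the telescoping cost bound. Each iteration strictly decreases $|E(\cdot)|$ by one, so the procedure halts after at most $|E(H)|$ steps, producing $H'$ which is acyclic, connected, and still contains $\mathcal T_{\text{req}}$—hence a tree on $V(H')\supseteq\mathcal T_{\text{req}}$. Telescoping the per-step inequality yields $C_{\text{total}}(H')\le C_{\text{total}}(H)$, which is exactly the statement. This also slots cleanly into the proof of the main Theorem: Lemma~\ref{lem:completeness} gives a connected terminal-spanning scaffold, Cycle Pruning converts it to a tree of no greater cost, and the minimality of the Steiner tree then closes the optimality argument.

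The main obstacle will be justifying that edge deletion on a cycle preserves connectivity without invoking machinery beyond the paper's scope. The cleanest route is the explicit rerouting argument above, or equivalently the standard characterization that an edge is a bridge iff it belongs to no cycle; I would favor the explicit version for self-containment. A secondary subtlety is making the cycle and edge choices deterministic so the lemma composes with the KMB tie-breaking rule of Section on Steiner Tree Solution; picking the lexicographically smallest edge on the lexicographically smallest cycle suffices. Everything else reduces to routine bookkeeping over a finite edge set.
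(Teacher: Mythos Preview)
Your proposal is correct and follows essentially the same approach as the paper: iteratively delete one edge from a remaining cycle, observe that connectivity (hence spanning of $\mathcal T_{\text{req}}$) is preserved and cost is non-increasing by $w(e)\ge 0$, and repeat until acyclic. The extra care you take with explicit rerouting, the termination bound $|E(H)|$, and deterministic tie-breaking only adds rigor to what the paper sketches more tersely.
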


\begin{proof}
If $H$ contains a cycle $C$, remove any edge $e\in C$ that does not disconnect $\mathcal{T}_{\text{req}}$ (such an edge exists because cycles have redundant connectivity). Since $w(e)\ge 0$ by (A3), removing $e$ does not increase cost and strictly decreases it if $w(e)>0$. Repeat until no cycles remain. Connectivity and coverage of $\mathcal{T}_{\text{req}}$ are preserved at each step, yielding $H'$ with $\mathrm{cost}(H')\le \mathrm{cost}(H)$.
\end{proof}

\begin{lemma}[Cost preservation from plans to graphs]
\label{lem:cost-alignment}
Under (A3), the surrogate cost of any valid reasoning scaffold equals the sum of weights over its join edges, and adding superfluous joins (edges) cannot reduce this cost.
\end{lemma}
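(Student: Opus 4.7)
The plan is to derive both claims essentially by unpacking definitions and invoking Assumption (A3) directly, with a brief appeal to (A1) for well-posedness. First, I would expand the definition of a valid reasoning scaffold $H=(V_H,E_H)$ and observe that, by that definition, there exists a well-typed SQL plan whose FROM/JOIN component uses exactly the edges $E_H$. Assumption (A3) then asserts that the surrogate cost of any such scaffold decomposes additively as $\sum_{e\in E_H} w(e)$, which is precisely the quantity $C_{\text{total}}(H)$ defined in the main text. This is already the first claim, so no further work is needed beyond checking that the scaffold's join edges coincide with $E_H$.

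For the second claim, I would take any scaffold $H^\star=(V_{H^\star},E_{H^\star})$ obtained from $H$ by adjoining a (possibly empty) set of superfluous edges $F \subseteq E\setminus E_H$. Schema fidelity (A1) guarantees that the enlarged connected subgraph still induces a well-typed equi-join plan, so $H^\star$ remains a scaffold and the additive formula applies to it as well. Subtracting the two cost expressions gives $\mathrm{cost}(H^\star)=\mathrm{cost}(H)+\sum_{e\in F} w(e)$, and since $w(e)\ge 0$ by the nonnegativity clause of (A3), we conclude $\mathrm{cost}(H^\star)\ge \mathrm{cost}(H)$. A trivial induction on $|F|$, or a direct appeal to the monotonicity clause of (A3), then closes the argument for an arbitrary enlargement.

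The main obstacle I expect is conceptual rather than algebraic: making the notion of a "superfluous" join precise. Adding an edge can in principle alter answer semantics (for example, by inflating row multiplicities across many-to-many relationships), which might threaten the validity of $H^\star$ as a scaffold for the same query. I would handle this by restricting attention to edges whose inclusion preserves scaffold validity with respect to the query at hand, and by emphasizing that the surrogate in (A3) is an edge-local complexity proxy, not a semantic equivalence check — its monotonicity statement operates purely at the level of edge weights once validity has been ensured. With this scoping spelled out, both claims collapse to additivity together with $w(e)\ge 0$, and no further combinatorial machinery is required.
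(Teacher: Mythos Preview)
Your proposal is correct and follows exactly the same approach as the paper, which dispatches the lemma in one line as ``Immediate from additivity and monotonicity in (A3).'' Your additional appeal to (A1) and the discussion of semantic pitfalls around superfluous joins are careful but unnecessary here, since the lemma concerns only the surrogate cost function and (A3) already packages both the additive decomposition and the monotonicity you need.
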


\begin{proof}
Immediate from additivity and monotonicity in (A3).
\end{proof}

\subsubsection{Optimality and equivalence}
\begin{proposition}[From Steiner optimality to scaffold optimality]
\label{prop:st-to-sc}
Let $T$ be a minimum Steiner tree for $(G_{\mathcal S},\mathcal{T}_{\text{req}})$. Then $T$ achieves the minimum surrogate cost among all valid reasoning scaffolds.
\end{proposition}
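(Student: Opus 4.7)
The plan is to prove that any valid reasoning scaffold $H$ satisfies $C_{\text{total}}(T)\le C_{\text{total}}(H)$, and then to observe that $T$ itself lies in the class of valid scaffolds, so this lower bound is attained. The argument simply chains the four preceding lemmas in a single reduction from an arbitrary scaffold down to a feasible Steiner tree candidate, and invokes the minimality of $T$.

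First I would fix any valid reasoning scaffold $H$ and apply the Completeness lemma to conclude that $H$ is a connected subgraph of $G_{\mathcal{S}}$ that spans $\mathcal{T}_{\text{req}}$. Next I would invoke the Cycle-pruning lemma to replace $H$ by a cycle-free subgraph $H'\subseteq H$ that is still connected and still spans $\mathcal{T}_{\text{req}}$, with $C_{\text{total}}(H')\le C_{\text{total}}(H)$. Because pruning only removes edges of $H$, we also have $H'\subseteq G_{\mathcal{S}}$, so $H'$ is a feasible tree for the Steiner problem on $(G_{\mathcal{S}},\mathcal{T}_{\text{req}})$. Applying the definition of $T$ as a minimum Steiner tree then gives $C_{\text{total}}(T)\le C_{\text{total}}(H')$, and concatenation yields the desired $C_{\text{total}}(T)\le C_{\text{total}}(H)$. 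Finally, the Soundness lemma shows that $T$ itself is a valid reasoning scaffold, so the minimum over Steiner trees is actually attained inside the class of valid scaffolds; the Cost-alignment lemma, in turn, certifies that the surrogate scaffold cost coincides with the edge-weight sum $\sum_{e\in E(H)} w(e)$, so the two notions of cost being compared are genuinely the same quantity.

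The main obstacle is essentially bookkeeping: verifying that the pruned subgraph $H'$ meets every clause of the Steiner tree definition (a subgraph of $G_{\mathcal{S}}$, connected, acyclic, and spanning $\mathcal{T}_{\text{req}}$) so that the minimality of $T$ can legitimately be invoked. A subtler point worth flagging is that the nonnegativity of edge weights in Assumption A3 is essential for the Cycle-pruning lemma to guarantee that removing redundant join edges does not increase the surrogate cost; if negative weights were permitted, a cyclic scaffold could in principle beat every acyclic candidate and the reduction would fail. Under the stated assumptions, none of these issues arise, and the proof reduces cleanly to composing the four preceding lemmas.
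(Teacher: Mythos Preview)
Your proposal is correct and follows essentially the same route as the paper's own proof: invoke Soundness to show $T$ is a valid scaffold, then for an arbitrary scaffold $H$ apply Completeness and Cycle-pruning to obtain a tree $H'$ with $\mathrm{cost}(H')\le\mathrm{cost}(H)$, compare against the Steiner minimality of $T$, and use Cost-alignment to identify graph cost with surrogate cost. Your additional remarks on why $H'$ is a legitimate Steiner candidate and on the role of nonnegative weights are helpful elaborations but do not change the structure of the argument.
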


\begin{proof}
By Lemma~\ref{lem:soundness}, $T$ is a valid scaffold. Consider any valid scaffold $H$. By Lemma~\ref{lem:completeness}, $H$ is a connected subgraph spanning $\mathcal{T}_{\text{req}}$. By Lemma~\ref{lem:cycle-pruning}, there exists a cycle-free $H'\subseteq H$ that still spans $\mathcal{T}_{\text{req}}$ with $\mathrm{cost}(H')\le \mathrm{cost}(H)$. Since $T$ is the minimum-cost connected, cycle-free subgraph spanning $\mathcal{T}_{\text{req}}$, we have $\mathrm{cost}(T)\le \mathrm{cost}(H')\le \mathrm{cost}(H)$. By Lemma~\ref{lem:cost-alignment}, these costs coincide with surrogate scaffold costs. Hence no valid scaffold beats $T$.
\end{proof}

\begin{proposition}[From scaffold optimality to Steiner optimality]
\label{prop:sc-to-st}
If $H^\star$ is an optimal reasoning scaffold, then there exists a minimum Steiner tree $T$ with $\mathrm{cost}(T)=\mathrm{cost}(H^\star)$.
\end{proposition}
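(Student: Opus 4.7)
The plan is to chain together the soundness, completeness, and cycle-pruning lemmas already established, via a two-sided squeeze on $\mathrm{cost}(H^\star)$. Starting from an optimal reasoning scaffold $H^\star$, I would first invoke Completeness to conclude that $H^\star$ is a connected subgraph of $G_{\mathcal S}$ spanning $\mathcal{T}_{\text{req}}$. I would then apply the cycle-pruning lemma to produce a cycle-free subgraph $H'\subseteq H^\star$ that still spans $\mathcal{T}_{\text{req}}$ and satisfies $\mathrm{cost}(H')\le \mathrm{cost}(H^\star)$. By construction, $H'$ is connected, acyclic, and contains every terminal, hence a feasible candidate for the Steiner tree problem on $(G_{\mathcal S},\mathcal{T}_{\text{req}})$.

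Next I would introduce a minimum Steiner tree $T$ for $(G_{\mathcal S},\mathcal{T}_{\text{req}})$, whose existence is guaranteed by the opening remark of the appendix. By the minimality defining $T$ among all connected subgraphs spanning $\mathcal{T}_{\text{req}}$, we have $\mathrm{cost}(T)\le \mathrm{cost}(H')$. In the other direction, Soundness tells us that $T$ is itself a valid reasoning scaffold, so the optimality of $H^\star$ among scaffolds gives $\mathrm{cost}(H^\star)\le \mathrm{cost}(T)$. Chaining these three inequalities yields
\[
\mathrm{cost}(H^\star)\;\le\;\mathrm{cost}(T)\;\le\;\mathrm{cost}(H')\;\le\;\mathrm{cost}(H^\star),
\]
which forces equality throughout and in particular $\mathrm{cost}(T)=\mathrm{cost}(H^\star)$, the desired conclusion.

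The main obstacle is conceptual rather than technical: I must be careful that the two notions of cost being compared, namely the surrogate scaffold cost from Assumption (A3) and the graph-theoretic Steiner weight $\sum_{e\in E(T)} w(e)$, refer to the same quantity. The cost-alignment lemma provides exactly this bridge, so it suffices to cite it once to license the chained inequalities. A secondary subtlety is that cycle pruning must not shed any terminal vertex, but this is already built into the pruning lemma, which only deletes a cycle edge and therefore preserves both connectivity and $\mathcal{T}_{\text{req}}\subseteq V$. No further graph-theoretic machinery is needed beyond the squeeze, making this converse essentially a mirror of Proposition (From Steiner optimality to scaffold optimality), with the roles of $T$ and $H^\star$ interchanged.
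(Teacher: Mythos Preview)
Your argument is correct and uses the same lemmas (Completeness, Cycle-pruning, Soundness) as the paper, but the closing step is organized differently. The paper, after obtaining the pruned tree $H'\subseteq H^\star$ with $\mathrm{cost}(H')\le \mathrm{cost}(H^\star)$, argues directly that $H'$ itself is a minimum Steiner tree: optimality of $H^\star$ among scaffolds forces $\mathrm{cost}(H')=\mathrm{cost}(H^\star)$, and then any connected subgraph spanning $\mathcal{T}_{\text{req}}$ with smaller cost would (being a scaffold) contradict the optimality of $H^\star$, so $H'$ is already minimum. You instead introduce an abstract minimum Steiner tree $T$ and run a two-sided squeeze, using Soundness explicitly to get $\mathrm{cost}(H^\star)\le \mathrm{cost}(T)$. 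Your route is slightly more explicit about where each inequality comes from and cleanly separates existence of $T$ from the optimality comparison; the paper's route is shorter and yields the stronger by-product that the pruned $H'$ is itself a witnessing minimum Steiner tree, not just some $T$ of equal cost.
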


\begin{proof}
By Lemma~\ref{lem:completeness}, $H^\star$ is a connected subgraph spanning $\mathcal{T}_{\text{req}}$. Apply Lemma~\ref{lem:cycle-pruning} to obtain a cycle-free $H' \subseteq H^\star$ with $\mathrm{cost}(H')\le \mathrm{cost}(H^\star)$. Optimality of $H^\star$ forces equality, so $H'$ is also optimal among connected subgraphs spanning $\mathcal{T}_{\text{req}}$. Hence $H'$ is a minimum Steiner tree $T$.
\end{proof}

\begin{proof}[Proof of Theorem]
By Proposition~\ref{prop:st-to-sc}, any minimum Steiner tree $T$ is an optimal reasoning scaffold under the surrogate cost. Conversely, by Proposition~\ref{prop:sc-to-st}, any optimal scaffold collapses (by cycle pruning) to a minimum Steiner tree of equal cost. Together with (A4) ensuring that non-join operators do not require additional base tables beyond those in the scaffold, this proves the equivalence and establishes that the minimum Steiner tree corresponds exactly to the most efficient and relevant scaffold.
\end{proof}

\paragraph{Remarks and edge cases.}
(1) \emph{Non-uniqueness.} If multiple minimum Steiner trees exist, each is an optimal scaffold under the surrogate; downstream tie-breaking (e.g., favoring lower estimated cardinality) can be applied without affecting Theorem. (2) \emph{Bridge tables as Steiner nodes.} Many-to-many link tables naturally appear as Steiner vertices; their inclusion is compelled by connectivity and captured by the cost. (3) \emph{Self-joins and aliases.} If a base table must be joined to itself, represent each alias as a distinct vertex with an alias edge to preserve correctness of the mapping; the above arguments remain unchanged. (4) \emph{Selections and filters.} Predicate pushdown may \emph{reduce} runtime but does not change the necessity of connecting $\mathcal{T}_{\text{req}}$; our result concerns the structural scaffold. 

\paragraph{Scope and limitations.}
Theorem concerns the JOIN skeleton only: selections/projections/grouping/aggregation are implemented on top of the scaffold without introducing extra base tables (operation locality).
Cases requiring external sources or materialized auxiliaries (e.g., complex windowed analytics) are outside our guarantee.
Moreover, if Stage~1 under- or over-identifies $\mathcal T_{\text{req}}$, the selected scaffold may deviate; our validation and re-planing loop updates $\mathcal T_{\text{req}}$ when inconsistencies are detected.

\section{Experiment Details}
\label{sec:exp-details}

\paragraph{Hardware \& Operation.}
All experiments ran on a single node with 2$\times$ NVIDIA A100 GPUs, 64 CPU cores, and 512 GB RAM with Ubuntu 22.04.3 LTS, CUDA12.1.

\paragraph{Software Stack.}
Python~3.10.13; PyTorch~2.3.1+cu121; transformers~4.43; accelerate~0.31; datasets~2.19; sentence-transformers~2.7; networkx~3.3; sqlglot~23.x; sqlite3~3.45. 

\paragraph{Decoding and Runtime Configuration.}
Unless specified otherwise, we use \emph{greedy decoding}:
\begin{itemize}
    \item temperature 0.0, top\_p/top\_k disabled; repetition/presence/frequency penalties 1.0/0/0;
    \item max input context 32,768 tokens (subject to model limits), max new tokens 512;
    \item stop sequences: \texttt{\{";\textbackslash n--END--", "\textless{}eot\_id\textgreater{}"\}}.
\end{itemize}
Batched inference is used only where supported by the provider; otherwise, calls are sequential. Concurrency is capped to respect provider rate limits; failed calls are retried up to three times with exponential backoff.

\section{Ablation Results}
\label{app:full_ablation_results}

To demonstrate that the contributions of SteinerSQL's components are robust and not specific to a single dataset, we replicated our main ablation study across all three other benchmarks:BIRD-dev, Spider1.0 and Spider2.0-Lite benchmarks. The results, presented in Tables \ref{tab:ablation_bird}, \ref{tab:ablation_spider2} and \ref{tab:ablation_spider}, show generally consistent trends of ablation.

\begin{table*}[h!]
\centering
\caption{Ablation study on BIRD-dev dataset across four LLM backbones. We report Execution Accuracy (EX) in percentages (\%) and the absolute performance drop compared to the full model.}
\label{tab:ablation_bird}
\small
\begin{tabular}{l cc cc cc cc}
\toprule
& \multicolumn{2}{c}{\textbf{DeepSeek-R1}} & \multicolumn{2}{c}{\textbf{GPT-4o}} & \multicolumn{2}{c}{\textbf{GPT-4.1}} & \multicolumn{2}{c}{\textbf{Gemini-2.5-Pro}} \\
\cmidrule(lr){2-3} \cmidrule(lr){4-5} \cmidrule(lr){6-7} \cmidrule(lr){8-9}
\textbf{Configuration} & \textbf{EX} & \textbf{Drop} & \textbf{EX} & \textbf{Drop} & \textbf{EX} & \textbf{Drop} & \textbf{EX} & \textbf{Drop} \\
\midrule
\textbf{Full Model (SteinerSQL)} & \textbf{70.01} & - & \textbf{64.93} & - & \textbf{67.08} & - & \textbf{73.92} & - \\
\midrule
\textit{Core Stage Isolation} \\
- Stage 2+3 Only (w/o Decomposition) & 61.80 & 8.21 & 57.30 & 7.63 & 60.17 & 6.91 & 65.19 & 8.73 \\
- Stage 1+3 Only (w/o Navigation) & 65.45 & 4.56 & 61.08 & 3.85 & 63.89 & 3.19 & 70.14 & 3.78 \\
- Stage 1+2 Only (w/o Validation) & 63.10 & 6.91 & 58.80 & 6.13 & 59.71 & 7.37 & 67.01 & 6.91 \\
\midrule
\textit{Stage 2 Ablation} \\
- w/o Connection Cost & 68.78 & 1.23 & 65.15 & -0.98 & 65.71 & 1.37 & 72.82 & 1.10 \\
- w/o Semantic Cost & 69.10 & 0.91 & 63.23 & 1.70 & 68.12 & -1.04 & 72.56 & 1.36 \\
- w/o Statistical Cost & 70.14 & -0.13 & 64.99 & -0.06 & 65.91 & 1.17 & 73.99 & -0.07 \\
- Remove Graph Module & 64.86 & 5.15 & 58.93 & 6.00 & 61.02 & 6.06 & 68.19 & 5.73 \\
- w/o Structured Examples & 65.32 & 4.69 & 59.26 & 5.67 & 62.13 & 4.95 & 69.17 & 4.75 \\
\bottomrule
\end{tabular}
\end{table*}

\begin{table*}[h!]
\centering
\caption{Ablation study on Spider2.0-Lite dataset across four LLM backbones. We report Execution Accuracy (EX) in percentages (\%) and the absolute performance drop compared to the full model.}
\label{tab:ablation_spider2}
\small
\begin{tabular}{l cc cc cc cc}
\toprule
& \multicolumn{2}{c}{\textbf{DeepSeek-R1}} & \multicolumn{2}{c}{\textbf{GPT-4o}} & \multicolumn{2}{c}{\textbf{GPT-4.1}} & \multicolumn{2}{c}{\textbf{Gemini-2.5-Pro}} \\
\cmidrule(lr){2-3} \cmidrule(lr){4-5} \cmidrule(lr){6-7} \cmidrule(lr){8-9}
\textbf{Configuration} & \textbf{EX} & \textbf{Drop} & \textbf{EX} & \textbf{Drop} & \textbf{EX} & \textbf{Drop} & \textbf{EX} & \textbf{Drop} \\
\midrule
\textbf{Full Model (SteinerSQL)} & \textbf{34.92} & - & \textbf{25.05} & - & \textbf{26.33} & - & \textbf{40.04} & - \\
\midrule
\textit{Core Stage Isolation} \\
- Stage 2+3 Only (w/o Decomposition) & 27.61 & 7.31 & 19.93 & 5.12 & 19.93 & 6.40 & 33.09 & 6.95 \\
- Stage 1+3 Only (w/o Navigation) & 29.80 & 5.12 & 21.21 & 3.84 & 22.67 & 3.66 & 35.28 & 4.76 \\
- Stage 1+2 Only (w/o Validation) & 28.15 & 6.77 & 19.01 & 6.04 & 19.56 & 6.77 & 33.46 & 6.58 \\
\midrule
\textit{Stage 2 Ablation} \\
- w/o Connection Cost & 34.92 & 0.00 & 23.95 & 1.10 & 26.51 & -0.18 & 39.12 & 0.92 \\
- w/o Semantic Cost & 33.46 & 1.46 & 23.40 & 1.65 & 27.80 & -1.47 & 39.55 & 1.49 \\
- w/o Statistical Cost & 34.82 & -1.10 & 24.13 & 0.92 & 25.78 & 0.55 & 38.94 & 1.10 \\
- Remove Graph Module & 29.62 & 5.30 & 18.83 & 6.22 & 20.66 & 5.67 & 34.19 & 5.85 \\
- w/o Structured Examples & 30.16 & 4.76 & 19.38 & 5.67 & 20.84 & 5.49 & 34.92 & 5.12 \\
\bottomrule
\end{tabular}
\end{table*}

\begin{table*}[h!]
\centering
\caption{Ablation study on Spider-test dataset across four LLM backbones. We report Execution Accuracy (EX) in percentages (\%) and the absolute performance drop compared to the full model.}
\label{tab:ablation_spider}
\small
\begin{tabular}{l cc cc cc cc}
\toprule
& \multicolumn{2}{c}{\textbf{DeepSeek-R1}} & \multicolumn{2}{c}{\textbf{GPT-4o}} & \multicolumn{2}{c}{\textbf{GPT-4.1}} & \multicolumn{2}{c}{\textbf{Gemini-2.5-Pro}} \\
\cmidrule(lr){2-3} \cmidrule(lr){4-5} \cmidrule(lr){6-7} \cmidrule(lr){8-9}
\textbf{Configuration} & \textbf{EX} & \textbf{Drop} & \textbf{EX} & \textbf{Drop} & \textbf{EX} & \textbf{Drop} & \textbf{EX} & \textbf{Drop} \\
\midrule
\textbf{Full Model (SteinerSQL)} & \textbf{88.59} & - & \textbf{87.43} & - & \textbf{87.23} & - & \textbf{88.30} & - \\
\midrule
\textit{Core Stage Isolation} \\
- Stage 2+3 Only (w/o Decomposition) & 80.56 & 8.03 & 80.46 & 6.97 & 80.27 & 6.96 & 79.30 & 9.00 \\
- Stage 1+3 Only (w/o Navigation) & 84.62 & 3.97 & 83.46 & 3.97 & 84.24 & 2.99 & 83.27 & 5.03 \\
- Stage 1+2 Only (w/o Validation) & 82.59 & 6.00 & 81.43 & 6.00 & 80.27 & 6.96 & 82.30 & 6.00 \\
\midrule
\textit{Stage 2 Ablation} \\
- w/o Connection Cost & 89.07 & -0.48 & 86.46 & 0.97 & 86.07 & 1.16 & 87.33 & 0.97 \\
- w/o Semantic Cost & 87.62 & 0.97 & 85.98 & 1.45 & 86.27 & 0.96 & 87.14 & 1.16 \\
- w/o Statistical Cost & 87.62 & 0.97 & 88.39 & -0.96 & 86.75 & 0.48 & 89.46 & -1.16 \\
- Remove Graph Module & 83.56 & 5.03 & 81.43 & 6.00 & 81.24 & 5.99 & 83.27 & 5.03 \\
- w/o Structured Examples & 84.62 & 3.97 & 82.40 & 5.03 & 82.21 & 5.02 & 84.33 & 3.97 \\
\bottomrule
\end{tabular}
\end{table*}

\section{Cases}
\label{append:case}
Table \ref{tab:caselogic1} and \ref{tab:caselogic2} are execution pipeline example in LogicCat, and the question is ``If the status of a data collector shows 'shutdown' and its installation altitude is 3000 meters, assuming the last data collection record before shutdown indicates a temperature of -10°C, please calculate the atmospheric pressure value at that location and analyze possible reasons for the shutdown." Table \ref{tab:casespider1} and \ref{tab:casespider2} are execution pipeline example in Spider2.0-Lite, and the question is ``Between April 1 and July 31 of 2017, using the hits product revenue data along with the totals transactions to classify sessions as purchase (transactions $\geq$ 1 and productRevenue not null) or non-purchase (transactions null and productRevenue null), compare the average pageviews per visitor for each group by month."

\begin{table*}[h!]
\centering
\caption{\textbf{Case of SteinerSQL Pipeline Execution in LogicCat (Part 1): Query and Decomposition}}
\label{tab:caselogic1}
\begin{tabular}{@{}lp{0.6\textwidth}@{}}
\toprule
\multicolumn{2}{@{}l}{\textbf{SteinerSQL Pipeline Execution for Collector bq004}} \\
\midrule
\textbf{Component} & \textbf{Details and Values} \\
\midrule
\textbf{Question} & If the status of a data collector shows 'shutdown' and its installation altitude is 3000 meters, assuming the last data collection record before shutdown indicates a temperature of $-10$°C, please calculate the atmospheric pressure value at that location and analyze possible reasons for the shutdown. \\
\addlinespace
\textbf{Final Procedure} & \texttt{/* Fictional Procedure for Calculation and Analysis */ \newline 1. SELECT installation\_altitude\_m, last\_temp\_c FROM collectors WHERE status = 'shutdown' AND collector\_id = 'bq004'; \newline 2. INPUT altitude (h) and temperature (T) INTO Barometric\_Formula(h, T); \newline 3. RETURN calculated\_pressure; \newline 4. ANALYZE environmental\_factors (T, h, calculated\_pressure) FOR shutdown\_causation;} \\
\addlinespace
\midrule
\multicolumn{2}{@{}l}{\textbf{Stage 1: Mathematical Decomposition}} \\
\midrule
\textbf{1a. Mathematical} &
- \textbf{Calculation:} Atmospheric pressure using the Barometric Formula: $P = P_0 \cdot e^{-\frac{gMh}{RT}}$ \newline
- \textbf{Constants:}
    - Sea level pressure ($P_0$): $101325$ Pa
    - Gravitational acceleration ($g$): $9.80665$ m/s²
    - Molar mass of Earth's air ($M$): $0.0289644$ kg/mol
    - Universal gas constant ($R$): $8.31446$ J/(mol$\cdot$K) \newline
- \textbf{Variables:}
    - Altitude ($h$): $3000$ m
    - Temperature ($T$): $-10$°C, which must be converted to Kelvin ($263.15$ K). \newline
- \textbf{Analysis:} Qualitative assessment of failure modes based on environmental data. \\
\addlinespace
\textbf{1b. Computational} &
- The calculation requires \texttt{installation\_altitude} and \texttt{last\_temperature}. \newline
- The analysis requires the \texttt{status}, \texttt{last\_temperature}, and the calculated pressure. \\
\addlinespace
\textbf{1c. Terminal Table} &
- A conceptual schema is assumed for this problem. \newline
- \texttt{collectors} (contains \texttt{collector\_id}, \texttt{status}, \texttt{installation\_altitude\_m}) \newline
- \texttt{readings} (contains \texttt{collector\_id}, \texttt{timestamp}, \texttt{temperature\_c}) \newline
\textbf{Resulting Terminals $\mathcal{T}_{\text{req}}$:} $\{$\texttt{collectors}, \texttt{readings}$\}$ \\
\bottomrule
\end{tabular}
\end{table*}

\begin{table*}[h!]
\centering
\caption{\textbf{Case of SteinerSQL Pipeline Execution in LogicCat (Part 2): Navigation and Validation}}
\label{tab:caselogic2}
\begin{tabular}{@{}lp{0.7\textwidth}@{}}
\toprule
\multicolumn{2}{@{}l}{\textbf{Stage 2: Schema Navigation}} \\
\midrule
\textbf{2a. Schema Graph} &
- \textbf{Vertices $V$:} $\{$\texttt{collectors}, \texttt{readings}$\}$ \newline
- \textbf{Edges $E$ and Costs $C(e)$:}
    - A direct foreign key relationship exists between the tables.
    - $C(\text{collectors, readings}) = 0.05$ (Direct FK link, cost is very low). \\
\addlinespace
\textbf{2b. Steiner Tree} &
- \textbf{Problem:} Find the path to link collector properties with their latest readings. \newline
- \textbf{Terminals:} $\{$\texttt{collectors}, \texttt{readings}$\}$ \newline
- \textbf{Reasoning Scaffold:} The resulting tree is a direct join: \textbf{\texttt{collectors} -- \texttt{readings}}. This links the static altitude data to the last recorded temperature. \\
\addlinespace
\midrule
\multicolumn{2}{@{}l}{\textbf{Stage 3: Multi-level Validation \& Execution}} \\
\midrule
\textbf{3a. Level 1:} &
- \textbf{Check:} Is the procedural logic sound? \newline
- \textbf{Result:} Pass. The procedure correctly identifies the necessary inputs, applies the correct physical formula, and outlines an analysis stage. \\
\addlinespace
\textbf{3b. Level 2:} &
- \textbf{Check:} Does the procedure align with the user's intent? \newline
- \textbf{Result:} Pass. It directly addresses both parts of the question: calculating the pressure and analyzing the shutdown. \\
\addlinespace
\textbf{3c. Level 3:} &
- \textbf{Check:} Is the mathematical logic and final analysis correct? \newline
- \textbf{Result:} Pass.
    - \textbf{Calculation:}
        - $T = -10 + 273.15 = 263.15$ K
        - Exponent: $-\frac{(9.80665 \cdot 0.0289644 \cdot 3000)}{(8.31446 \cdot 263.15)} = -\frac{851.65}{2187.95} \approx -0.3892$
        - $P = 101325 \cdot e^{-0.3892} = 101325 \cdot 0.6776 \approx 68669.5$ Pa
        - Converting to hectopascals: $68669.5 \text{ Pa} = \textbf{686.7 hPa}$. (Note: Using a standard atmosphere model gives $\sim$700 hPa; this direct calculation is more precise for the given temperature). \newline
    - \textbf{Shutdown Analysis:}
        - \textbf{Low Temperature Failure:} At $-10$°C, battery performance drops significantly and can lead to a low-voltage shutdown. This is the most probable cause.
        - \textbf{Icing:} Moisture could freeze on sensors, anemometers, or solar panels (if present), causing mechanical failure or power loss.
        - \textbf{Component Stress:} Extreme cold can make materials like plastics and wiring insulation brittle, leading to physical damage. \\
\bottomrule
\end{tabular}
\end{table*}

\begin{table*}[h!]
\centering
\caption{\textbf{Case of SteinerSQL Pipeline Execution in Spider2.0-Lite (Part 1): Query and Decomposition}}
\label{tab:casespider1}
\begin{tabular}{@{}lp{0.6\textwidth}@{}}
\toprule
\textbf{Component} & \textbf{Details and Values} \\
\midrule
\textbf{Question} & Between April 1 and July 31 of 2017, using the hits product revenue data along with the totals transactions to classify sessions as purchase (transactions $\geq$ 1 and productRevenue not null) or non-purchase (transactions null and productRevenue null), compare the average pageviews per visitor for each group by month. \\
\addlinespace
\textbf{Final Query} & \texttt{SELECT FORMAT\_DATE('\%Y\%m', PARSE\_DATE('\%Y\%m\%d', date)) AS month, CASE WHEN totals.transactions >= 1 AND hits.productRevenue IS NOT NULL THEN 'Purchase' WHEN totals.transactions IS NULL AND hits.productRevenue IS NULL THEN 'Non-Purchase' END AS session\_type, SUM(totals.pageviews) / COUNT(DISTINCT fullVisitorId) AS avg\_pageviews\_per\_visitor \newline FROM \`bigquery-public-data.google\_analytics
\newline \_sample.ga\_sessions\_*\`, UNNEST(hits) AS hits \newline WHERE \_TABLE\_SUFFIX BETWEEN '20170401' AND '20170731' AND ((totals.transactions >= 1 AND hits.productRevenue IS NOT NULL) OR (totals.transactions IS NULL AND hits.productRevenue IS NULL)) \newline GROUP BY month, session\_type \newline ORDER BY month, session\_type;} \\
\addlinespace
\midrule
\multicolumn{2}{@{}l}{\textbf{Stage 1: Mathematical Decomposition}} \\
\midrule
\textbf{1a. Mathematical} &
- \textbf{Aggregation:} Average pageviews per visitor ($\text{SUM(pageviews)} / \text{COUNT(DISTINCT fullVisitorId)}$). \newline
- \textbf{Grouping:} By month and by a new classification group. \newline
- \textbf{Temporal Filter:} Date between April 1, 2017 and July 31, 2017. \newline
- \textbf{Classification Logic (CASE WHEN):}
 - Group 1 ('Purchase'): \texttt{transactions >= 1} AND \texttt{productRevenue IS NOT NULL}.
- Group 2 ('Non-Purchase'): \texttt{transactions IS NULL} AND \texttt{productRevenue IS NULL}. \\
\addlinespace
\textbf{1b. Computational} &
- The final average metric requires \texttt{pageviews}, and \texttt{fullVisitorId}. 
- The classification requires \texttt{transactions} and \texttt{productRevenue}. \newline
- The grouping requires the \texttt{date} (to extract the month) and the classification result. \newline
- The filtering requires the \texttt{date}. \\
\addlinespace
\textbf{1c. Terminal Table} &
Based on the required attributes, the essential conceptual tables (terminals) are identified. In the context of the GA schema, these are fields within the main \texttt{ga\_sessions} table. \newline
- \texttt{ga\_sessions} (contains \texttt{date}, \texttt{fullVisitorId}) \newline
- \texttt{totals} (contains \texttt{transactions}, \texttt{pageviews}) \newline
- \texttt{hits} (contains \texttt{productRevenue}) \newline
\textbf{Resulting Terminals $\mathcal{T}_{\text{req}}$:} $\{$\texttt{ga\_sessions}, \texttt{totals}, \texttt{hits}$\}$ \\
\bottomrule
\end{tabular}
\end{table*}

\begin{table*}[h!]
\centering
\caption{\textbf{Case of SteinerSQL Pipeline Execution in Spider2.0-Lite (Part 2): Navigation and Validation}}
\label{tab:casespider2}
\begin{tabular}{@{}lp{0.6\textwidth}@{}}
\toprule
\textbf{Component} & \textbf{Details and Values} \\
\midrule
\multicolumn{2}{@{}l}{\textbf{Stage 2: Schema Navigation}} \\
\midrule
\textbf{2a. Schema Graph} &
- \textbf{Vertices $V$:} $\{$\texttt{ga\_sessions}, \texttt{totals}, \texttt{hits}$\}$ \newline
- \textbf{Edges $E$ and Costs $C(e)$:}
 - Edges exist for foreign key (FK) relationships or high similarity. In the BigQuery schema, \texttt{totals} and \texttt{hits} are nested structures within \texttt{ga\_sessions}, implying a direct, low-cost link.
 - $C(\text{ga\_sessions, totals}) = 0.08$ (Direct FK-like link, cost is very low).
 - $C(\text{ga\_sessions, hits}) = 0.09$ (Direct FK-like link, cost is very low).
 - $C(\text{totals, hits}) = 0.58$ (No direct link, higher cost based on dissimilarity). \\
\addlinespace
\textbf{2b. Steiner Tree} &
- \textbf{Problem:} Find the minimum cost subgraph connecting all terminals in $\mathcal{T}_{\text{req}}$. \newline
- \textbf{Terminals:} $\{$\texttt{ga\_sessions}, \texttt{totals}, \texttt{hits}$\}$ (all nodes in the graph). \newline
- \textbf{Solution:} The problem becomes finding the Minimum Spanning Tree (MST) of the graph. The MST algorithm selects the edges with the lowest costs that connect all vertices without forming a cycle.
    - Select edge (\texttt{ga\_sessions}, \texttt{totals}) with cost $0.08$.
    - Select edge (\texttt{ga\_sessions}, \texttt{hits}) with cost $0.09$. \newline
- \textbf{Reasoning Scaffold:} The resulting tree is \textbf{\texttt{totals} -- \texttt{ga\_sessions} -- \texttt{hits}}. This scaffold indicates that \texttt{ga\_sessions} is the central entity required to link information from \texttt{totals} and \texttt{hits}. \\
\addlinespace
\midrule
\multicolumn{2}{@{}l}{\textbf{Stage 3: Multi-level Validation}} \\
\midrule
\textbf{3a. Level 1:} &
- \textbf{Check:} Is the SQL syntactically correct and executable? \newline
- \textbf{Result:} Pass. The query uses valid BigQuery SQL syntax, including \texttt{FORMAT\_DATE}, \texttt{PARSE\_DATE}, \texttt{UNNEST}, and filtering on the \texttt{\_TABLE\_SUFFIX} pseudo-column. \\
\addlinespace
\textbf{3b. Level 2:} &
- \textbf{Check:} Does the query align with the user's semantic intent? \newline
- \textbf{Result:} Pass.
 - All terminal concepts (\texttt{ga\_sessions}, \texttt{totals}, \texttt{hits}) are correctly utilized in the query.
 - The join logic (via \texttt{UNNEST}) correctly reflects the \texttt{ga\_sessions} -- \texttt{hits} path from the scaffold.
 - Attributes in \texttt{SELECT}, \texttt{WHERE}, and \texttt{GROUP BY} correctly map to the question's entities (month, classification, pageviews, etc.). \\
\addlinespace
\textbf{3c. Level 3:} &
- \textbf{Check:} Is the query's mathematical logic correct? \newline
- \textbf{Result:} Pass.
 - The \texttt{AVG} is correctly implemented as \texttt{SUM(...) / COUNT(DISTINCT ...)}.
- The \texttt{CASE WHEN} conditions for 'Purchase' and 'Non-Purchase' sessions exactly match the logic specified in the prompt.
- The date range filter in the \texttt{WHERE} clause correctly implements "Between April 1 and July 31 of 2017".
- The \texttt{GROUP BY} clause correctly includes both the month and the derived session type, as required to compare the groups. \\
\bottomrule
\end{tabular}
\end{table*}

\end{document}